\documentclass[10pt,twocolumn,letterpaper]{article}

\usepackage{cvpr}
\usepackage{booktabs}
\usepackage{times}
\usepackage{epsfig}
\usepackage{graphicx}
\usepackage{amsmath}
\usepackage{amssymb}
\usepackage{amsthm}
\usepackage{array}
\usepackage{mathtools}
\usepackage{algorithm}
\usepackage{algorithmic}
\usepackage{subfig}
\usepackage{varwidth}
\usepackage{mdframed}
\usepackage[export]{adjustbox} 
 
\graphicspath{{./figs/}}

\def \R{{\mathbb{R}}}

\def \E{{\mathcal{E}}}

\def \0{{\vec{0}}}

\def \alg1 {algorithm \ref{alg1} }
\def \Rij {\tilde{R}_{ij}}
\def \Rh {\tilde{R}}
\def \RR {\tilde{R}}

\newcommand{\trace}[2]{<#1,#2>}
\def \st{{\textnormal{s.t.}}}

\newcommand{\SO}[1]{ \textnormal{SO}(#1) }

\newcommand{\comment}[1]{}

\newtheorem{theorem}{Theorem}[section]
\newtheorem{lemma}{Lemma}[section]
\newtheorem{corollary}{Corollary}[section]

\newcommand{\matris}[1]{ \left[ \begin{smallmatrix} #1 \end{smallmatrix} \right]}

\newcommand{\Arraj}[1]{  \begin{array}{ccccccccccc} #1 \end{array} }

\newcommand{\argmin}{\operatornamewithlimits{arg\ min}}













\newcommand{\tr}[1]{\text{tr}\left( #1\right)}
\newcolumntype{L}[1]{>{\raggedright\let\newline\\\arraybackslash\hspace{0pt}}m{#1}}
\newcolumntype{C}[1]{>{\centering\let\newline\\\arraybackslash\hspace{0pt}}m{#1}}
\newcolumntype{R}[1]{>{\raggedleft\let\newline\\\arraybackslash\hspace{0pt}}m{#1}}
\newcommand{\ra}[1]{\renewcommand{\arraystretch}{#1}}

\newcommand{\OO}[1]{{\rm O}(#1)}    
\renewcommand{\SO}[1]{{\rm SO}(#1)}    
\setlength{\belowcaptionskip}{-10pt}
\renewcommand{\AA}{W}


\usepackage[pagebackref=true,breaklinks=true,letterpaper=true,colorlinks,bookmarks=false]{hyperref}

\cvprfinalcopy 

\def\cvprPaperID{984} 

\ifcvprfinal\pagestyle{empty}\fi

\begin{document}

\clearpage
\title{Rotation Averaging and Strong Duality}

\author{Anders Eriksson$^1$,\hspace{2mm} Carl Olsson$^{2,3}$,\hspace{2mm} Fredrik Kahl$^{2,3}$ and\hspace{1mm} Tat-Jun Chin$^4$ \vspace{0.5cm} \\
$^1$School of Electrical Engineering and Computer Science, Queensland University of Technology\\
$^2$Department of Electrical Engineering,
Chalmers University of Technology\\
$^3$Centre for Mathematical Sciences, Lund University\\
$^4$School of Computer Science, The University of Adelaide
}

\maketitle

\begin{abstract}
In this paper we explore the role of duality principles within the problem of rotation averaging, a fundamental task in a wide range of computer vision applications. In its conventional form, rotation averaging is stated as a minimization over multiple rotation constraints. As these constraints are non-convex, this problem is generally considered challenging to solve globally. We show how to circumvent this difficulty through the use of Lagrangian duality. While such an approach is well-known it is normally not guaranteed to provide a tight relaxation. Based on spectral graph theory, we analytically prove that in many cases there is no duality gap unless the noise levels are severe. This allows us to obtain certifiably global solutions to a class of important non-convex problems in polynomial time. 

We also propose an efficient, scalable algorithm that out-performs general purpose numerical solvers and is able to handle the large problem instances commonly occurring in structure from motion settings. The potential of this proposed method is demonstrated on a number of different problems, consisting of both synthetic and real-world data. \end{abstract}


\vspace{-7mm}
\section{Introduction}


Rotation averaging appears as a subproblem in many important applications in computer vision, robotics, sensor networks and related areas. Given a number of relative rotation estimates between pairs of poses, the goal is to compute absolute camera orientations with respect to some common coordinate system. In computer vision, for instance, non-sequential structure from motion systems such as \cite{martinec-pajdla-cvpr-2007,enqvist2011non,moulon-etal-iccv-2013} rely on rotation averaging to initialize bundle adjustment.
The overall idea is to consider as much data as possible in each step to avoid suboptimal reconstructions. In the context of rotation averaging this amounts to using as many camera pairs as possible.

The problem can be thought of as inference on the camera graph. An edge $(i,j)$ in this undirected graph represents a relative rotation measurement $\Rij$ and the objective is to find the absolute orientation $R_i$ for each vertex $i$ such that $R_i \Rij =R_j$ holds (approximately in the presence of noise) for all edges.
The problem is generally considered difficult due to the need to enforce non-convex rotation constraints. Indeed, both $L_1$ and $L_2$ formulations of rotation averaging can have local minima, see Fig.~\ref{fig:minima}. Wilson {\em et al.} \cite{Wilson2016} studied local convexity of the problem and showed that instances with large loosely connected graphs are hard to solve with local, iterative optimization methods.

In contrast, our focus is on global optimality. In this paper we show that convex relaxation methods can in fact overcome the difficulties with local minima in rotation averaging. We utilize Lagrangian duality to handle the quadratic non-convex rotation constraints. While such an approach is normally not guaranteed to provide a tight relaxation we give analytical error bounds that guarantee there will be no duality gap. For instance, it is sufficient that each angular residual is less than $42.9^\circ$ to ensure optimality for complete camera graphs.
Additionally, we develop a scalable and efficient algorithm, based on block coordinate descent, that outperforms standard semidefinite program (SDP) solvers for this problem. 

\begin{figure}[t]
\begin{center}
\vspace{-3mm}
   \setlength\tabcolsep{0pt} 
   \begin{tabular}{ccc} \hspace{-0.3cm}
   \includegraphics[width=0.38\linewidth]{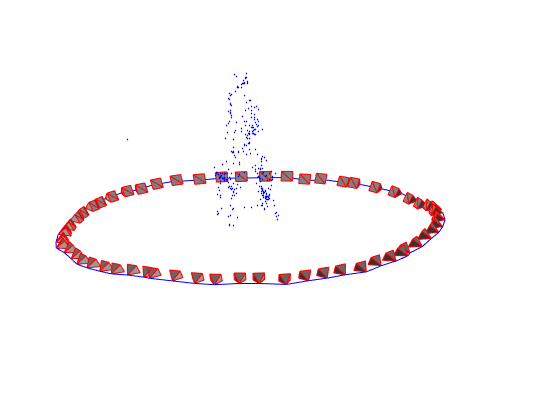} \hspace{-0.4cm} &
   \includegraphics[width=0.38\linewidth]{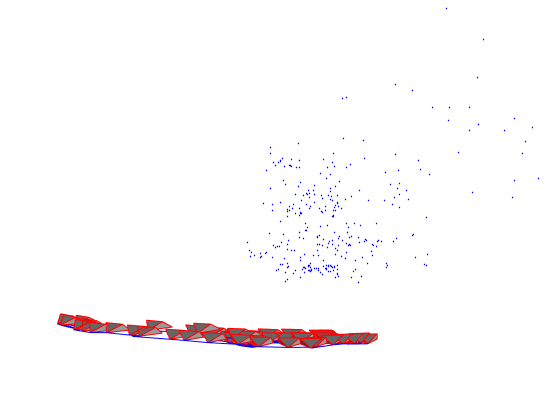} \hspace{-0.7cm} &
   \includegraphics[width=0.38\linewidth]{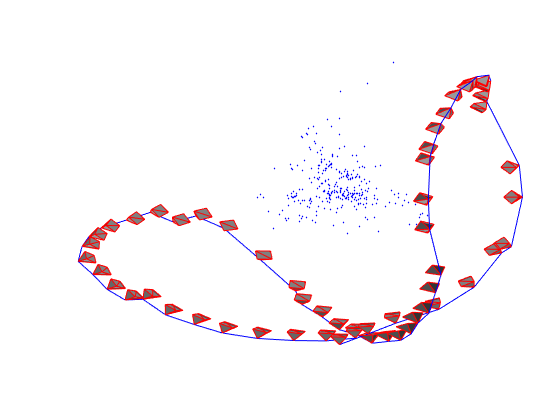} \vspace{-0.6cm}
  \end{tabular} 
   \setlength\tabcolsep{6pt} 
\end{center}
   \caption{In many structure from motion pipelines, camera orientations are estimated with rotation averaging followed by recovery of camera centres (red) and 3D structure (blue). Here are three solutions corresponding to different local minima of the same rotation averaging problem. }
\label{fig:minima}
\end{figure}

\paragraph{Related work. } Rotation averaging has been under intense study in recent years, see \cite{hartley2013,kahl2008multiple,martinec-pajdla-cvpr-2007,arrigoni2014robust,tron2012intrinsic,corleone-etal-icra-2015}. 
Despite progress in practical algorithms, they largely come without guarantees. One of the earliest averaging methods was due to Govindu \cite{govindu-cvpr-2001}, who
showed that when representing the rotations with quaternions the problem can be viewed as a linear homogeneous least squares problem. There is however a sign ambiguity in the quaternion representation that has to be resolved before the formulation can be applied. 
It was observed by Fredriksson and Olsson in \cite{fredriksson2012simultaneous} that since both the objective and the constraints are quadratic, the Lagrange dual can be computed in closed form. The resulting SDP was experimentally shown to have no duality gap for moderate noise levels.

A more straightforward rotation representation is $3\times 3$ matrices. Martinec and Pajdla \cite{martinec-pajdla-cvpr-2007} approximately solve the problem by ignoring the orthogonality and determinant constraints.
A similar relaxation was derived by Arie-Nachimson {\em et al.} in \cite{arie2012global}. In addition, an SDP formulation was presented which is equivalent to the one we address here, but with no performance guarantees.
The tightness of SDP relaxations for 2D rotation averaging is studied in \cite{zhong-baumal-arxiv-2017}.

A number of robust approaches have been developed to handle outlier measurements.
A sampling scheme over spanning trees of the camera graph is developed by Govindu in \cite{govindu-eccv-2006}. Enqvist {\em et al.} \cite{enqvist2011non} also start from a  spanning tree and add relative rotations that are consistent with the solution.
In \cite{hartley-etal-cvpr-2011} the Weiszfeld algorithm is applied to single rotation averaging with the $L_1$ norm. In \cite{hartley-etal-2010} convexity properties of the single rotation averaging problem are given. 
To our knowledge these results do not generalize to the case of multiple rotations. In \cite{chatterjee-govindu-iccv-2013} a robust formulation is solved using IRLS and in \cite{boumal2014crbsynch} Cram\'er-Rao lower bounds are computed for maximum likelihood estimators, but neither with any optimality guarantees.

A closely related problem is that of pose graph estimation, where camera orientations and positions are jointly optimized. In this context Lagrangian duality has been applied \cite{carlone-etal-tr-2016,carlone-dellaert-icra-2015}.
In \cite{tron-vidal-icra-2014} a consensus algorithm that allows for efficient distributed computations is presented. A fast verification technique for pose graph estimation was given in \cite{briales2016fast}. In a recent paper \cite{rosen-etal-arxiv-2017} an SDP relaxation for pose graph estimation with performance guarantees is analyzed. It is shown that there is a noise level $\beta$ for which the relaxation is guaranteed to provide the optimal solution. However, the result only shows the existence of $\beta$. Its value which is dependent on the problem instance is not computed. In contrast our result for rotation averaging gives explicit noise bounds. 

The main contributions of this paper are: \vspace{-0.5mm}
\begin{itemize}
\item We apply Lagrangian duality to the rotation averaging problem with the chordal error distance and study the properties of the obtained relaxations.\vspace{-0.5mm}
\item We develop strong theoretical bounds on the noise level that guarantee exact global recovery based on spectral graph theory.\vspace{-0.5mm}
\item We develop a conceptually simple and scalable algorithm which is able to handle large problem instances occurring in structure from motion problems.\vspace{-0.5mm}
\item We present experimental results that confirm our theoretical findings.
\end{itemize}

\subsection{Notation and Conventions}
Let $G=(V,E)$ denote an undirected graph with vertex set $V$ and edge set $E$ and let $n = |V|$. The adjacency matrix $A$ is by definition the $n\times n$ matrix with elements
\begin{align}
a_{ij}=
\left\{ 
\Arraj{
0 & (i,j)\notin E \\
1 & (i,j)\in E 
} \mbox{ for }i,j = 1,\ldots,n.
\right.
\end{align}
The degree $d_i$ is the number of edges that touch vertex $i$, and the degree matrix $D$ is the diagonal matrix $D = \text{diag}\left(d_1,\ldots,d_n\right)$. The Laplacian $L_G$ of $G$ is defined by
\begin{align}
L_G = D-A.
\end{align}
It is well-known that $L_G$ has a zero eigenvalue with multiplicity $1$. The second smallest eigenvalue $\lambda_2$ of $L_G$, also known as the \emph{Fiedler value}, reflects the connectivity of $G$. For a connected graph $G$, which is the only case of interest to us, we always have $\lambda_2>0$.

The group of all rotations about the origin in three dimensional Euclidean space 
is the \emph{Special Orthogonal Group}, denoted $\SO{3}$. 
This group is commonly represented by rotation matrices, orthogonal 
$3 \times 3$ real-valued matrices with positive determinant, i.e.,  
\begin{align}
\SO{3} \in \{ R \in \R^{3 \times 3}\ | \ R^TR=I,\ \det(R)=1 \}.
\label{embed_rotmat}
\end{align}
If we omit $\det(R)\!=\!1$, we get the \emph{Orthogonal Group},~$\OO{3}$.

We will use the convention that $\lambda_i(A)$ is the $i$:th smallest eigenvalue of the symmetric matrix $A$. The trace of matrix $A$ is denoted by $\tr{A}$ and the Kronecker product of matrices $A$ and $B$ by $A\otimes B$. The norm $\|A\|$ is the standard operator 2-norm and $\|A\|_F$ the Frobenius norm.

\section{Problem Statement}

The problem of rotation averaging is
defined as the task of determining a set of $n$ absolute 
rotations $R_1,...,R_n$ given distinct 
estimated relative rotations $\Rij$.
Available relative rotations are represented by the edge set $E$ of the camera graph $V$.
Under ideal conditions this amounts to finding the $n$ rotations compatible 
with the linear relations, 
\begin{align}
 R_i \Rij =R_j,
\label{eq_idealrot}
\end{align}
for all $(i,j)\in E$.
However, in the presence of noise, a solution to \eqref{eq_idealrot} is not 
guaranteed to exist. 
Instead, it is typically solved in a least-metric sense, 
\begin{align}
\min_{R_1,...,R_n} \sum_{(i,j)\in E} d(R_i \Rij,R_j)^p,
\end{align}
where $p\geq 1$ and $d(\cdot,\cdot)$ is a distance function.

A number of distinct choices of metrics on $\SO{3}$ exist, see Hartley {\em et al.}~\cite{hartley2013} for a comprehensive 
discussion. 
In this work we restrict ourselves to the chordal distance, the most 
commonly used metric when analyzing Lagrangian duality in rotation averaging. 
It has proven to be a convenient choice 
as it is  quadratic in its entries leading to a particularly simple derivation and form of the associated dual problem. 

The chordal distance between two rotations $R$ and $S$ is defined as their Euclidean distance in the 
embedding space,
\begin{align}
d(R,S)=\|R-S\|_F.
\end{align}
It can be shown \cite{hartley2013} that the chordal distance can also be written as $d(R,S) = 2\sqrt{2}\sin\frac{|\alpha|}{2}$, where $\alpha$ is the rotation angle of $RS^{-1}$.
With the this choice of metric, the rotation averaging problem is defined as
\begin{align}
\argmin_{R_1, ..., R_n\in \SO{3}} \sum_{(i,j) \in E } \| R_i \Rij  - R_j \|^2_F,
\label{rotavgmain}
\end{align}
which, with trace notation, can be simplified to
\begin{align}
\argmin_{R_1, ..., R_n \in \SO{3}}
-\sum_{(i,j) \in E } \tr{  R_i \Rij R_j^T },
\label{rotavgmain2}
\end{align}
which constitutes our \emph{primal problem}.

It will be convenient with a compact matrix formulation. Let
\begin{align}
\RR = \matris{ 
0 & a_{12}\Rh_{12} & \hdots & a_{1n}\Rh_{1n}\\
a_{21}\Rh_{21}& 0 & \hdots & a_{2n}\Rh_{2n} \\
\vdots & & \ddots & \vdots \\
a_{n1}\Rh_{n1}& a_{n2}\Rh_{n2} & \hdots & 0
},
\label{Rformula}
\end{align}
where $\Rh_{ij} = \Rh_{ji}^T$ and $a_{ij}$ are the elements of the adjacency matrix $A$ of the camera graph $G$
and let
\begin{equation}
R = 
\begin{bmatrix}
R_1 & R_2 & \hdots & R_n
\end{bmatrix}.
\end{equation}
We may now write the primal problem as
\begin{equation}
\begin{array}{lll}
(P) &\min & -\tr{R \RR R^T} \\
&\text{s.t.}& R \in \SO{3}^n. 
\end{array}
\end{equation}

\section{Optimality Conditions}
\subsection{Necessary Local Optimality Conditions}
We now turn to the KKT conditions of our primal problem $(P)$.
The constraint set $R \in \SO{3}^n$ consists of two types of constraints; the orthogonality constraints $R_i^T R_i = I$ and the determinant constraints $\det(R_i)=1$.

Consider relaxing the rotation averaging problem by removing the determinant constraint,
\begin{equation}
\begin{array}{lll}
(P') \quad &\min &-\tr{R \RR R^T} \\
&\text{s.t.} &R \in  \OO{3}^n. 
\end{array}
\end{equation}
The constraint $R \in \OO{3}^n$ still requires the $R_i$'s to be orthogonal. The orthogonal matrices consist of two disjoint, non-connected sets, with determinants $1$ and $-1$ respectively. Hence, any local minimizer to the problem $(P)$ also has to be a local minimizer, and therefore a KKT point, to $(P')$. We note that orthogonality can be enforced by restricting the $3\times 3$  diagonal blocks of the symmetric matrix $R^TR$ to be identity matrices.
If
\begin{equation}
\Lambda = 
\begin{bmatrix}
\Lambda_1 & 0 & 0 & \hdots \\ 
0 &\Lambda_2 &  0 & \hdots \\
 0 & 0 & \Lambda_3 & \hdots \\
 \vdots & \vdots & \vdots & \ddots
\end{bmatrix}
\label{eq:lambdadef}
\end{equation}
is a symmetric matrix then the Lagrangian can be written
\begin{equation}
\begin{array}{rl}
L(R,\Lambda) & =  -\tr{R \RR R^T} - \tr{\Lambda(I-R^TR)} \\
& = -\tr{R(\Lambda-\RR)R^T}-\tr{\Lambda}.
\end{array}
\end{equation}
Taking derivatives gives the KKT equations 
\begin{subequations}
\label{kktP}
\begin{align}
& \textit{(Stationarity) } \nonumber\\
&(\Lambda^* - \RR)R^{*^T} = 0\label{eq:KKT} \\
& \textit{(Primal feasibility) } \nonumber\\
& R^* \in \SO{3}^n.
\end{align}
\end{subequations}
Equation~\eqref{eq:KKT} states that the rows of a local minimizer $R^*$ will be eigenvectors of the matrix $\Lambda^*-\RR$ with eigenvalue zero. This allows us to compute the optimal Lagrange multiplier $\Lambda^*$ 
from a given minimizer $R^*$.
By \eqref{eq:KKT} we see that
\begin{equation} \label{eq:lambda-solution}
\Lambda^*_i R^{*T}_i = \sum_{j\neq i} a_{ij}\RR_{ij} R_j^{*T} \Longleftrightarrow
\Lambda^*_i = \sum_{j\neq i} a_{ij}\RR_{ij} R_j^{*T} R^{*}_i
\end{equation}
for $i=1,\ldots,n$.
\begin{lemma}
For a stationary point $R^*$ to the primal problem ($P$), we can compute the corresponding Lagrangian multiplier $\Lambda^*$ in closed form via (\ref{eq:lambda-solution}).
\end{lemma}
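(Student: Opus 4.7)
The proof is essentially a short calculation that extracts $\Lambda^*$ algebraically from the stationarity condition \eqref{eq:KKT}, so the main work is really to lay out the computation cleanly and to verify that the resulting expression is consistent with the constraint that $\Lambda^*$ be symmetric block-diagonal as in \eqref{eq:lambdadef}.

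My plan is to start from the KKT stationarity equation $(\Lambda^*-\RR)R^{*T}=0$ and read off the $i$-th block of $3$ rows. Because $\Lambda^*$ is block-diagonal with symmetric $3\times 3$ blocks $\Lambda_i^*$ on the diagonal and the $(i,j)$ block of $\RR$ is $a_{ij}\RR_{ij}$, this block-row equation reads
\begin{equation*}
\Lambda_i^* R_i^{*T} \;=\; \sum_{j\neq i} a_{ij}\,\RR_{ij}\,R_j^{*T},
\qquad i=1,\dots,n.
\end{equation*}
Next, since $R^*\in\SO{3}^n$, primal feasibility guarantees that each $R_i^*$ is invertible with $R_i^{*-1}=R_i^{*T}$. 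Multiplying both sides on the right by $R_i^*$ eliminates $R_i^{*T}$ on the left, yielding exactly the stated formula
\begin{equation*}
\Lambda_i^* \;=\; \sum_{j\neq i} a_{ij}\,\RR_{ij}\,R_j^{*T}R_i^*.
\end{equation*}

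The only real subtlety is that the right-hand side must actually be symmetric, since $\Lambda_i^*$ is constrained to be symmetric by construction in \eqref{eq:lambdadef}. I would dispatch this by remarking that standard KKT theory guarantees the existence of such a symmetric multiplier $\Lambda^*$ at any stationary point of $(P')$; since the calculation above shows that \eqref{eq:KKT} determines $\Lambda_i^*$ uniquely (multiplication by the invertible matrix $R_i^*$ is a bijection), the derived expression must coincide with that symmetric multiplier, hence must itself be symmetric. Equivalently, one may observe that $\RR^T=\RR$ (since $\RR_{ij}=\RR_{ji}^T$) together with the stationarity condition forces the right-hand side to equal its transpose. I do not anticipate any real obstacle here — the content of the lemma is simply that, because orthogonality makes the stationarity equation linearly invertible in $\Lambda^*$, the multiplier is a closed-form function of any given stationary point $R^*$.
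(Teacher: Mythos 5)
Your proposal is correct and matches the paper's own argument: the authors likewise read off the $i$-th block row of the stationarity condition $(\Lambda^*-\RR)R^{*T}=0$ and right-multiply by $R_i^*$, using $R_i^{*T}R_i^*=I$, to obtain $\Lambda^*_i = \sum_{j\neq i} a_{ij}\RR_{ij}R_j^{*T}R_i^*$. Your additional remark that the right-hand side must be symmetric (since the KKT multiplier for the constraint $I-R_i^TR_i=0$ can be taken symmetric and the linear equation determines $\Lambda_i^*$ uniquely) is a worthwhile point that the paper leaves implicit.
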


\subsection{Sufficient Global Optimality Conditions}
We begin this section by deriving the Lagrange dual of ($P$) which is a semidefinite program that we will use for optimization in later sections.
The dual problem is defined by
\begin{equation}
\quad \max_{\Lambda-\RR \succeq 0} \min_R L(R,\Lambda).
\end{equation}
Since the (unrestricted) optimum of $\min_R L(R,\Lambda)$ is either $-\tr{\Lambda}$, when $\Lambda-\RR \succeq 0$, or $-\infty$ otherwise, we get
\begin{equation}
(D) \quad \max_{\Lambda-\RR \succeq 0} -\tr{\Lambda}.
\end{equation}
It is clear (through standard duality arguments) that ($D$) gives a lower bound on ($P$). Furthermore, if $R^*$ is a stationary point with corresponding Lagrangian multiplier $\Lambda^*$ that satisfies $\Lambda^* - \RR \succeq 0$ then $\Lambda^*$ is feasible in ($D$) and by \eqref{eq:lambda-solution}, $-\tr{\Lambda^*} = - \tr{R^* \RR {R^*}^T}$, which shows that there is no duality gap between ($P$) and ($D$). Thus, the convex program ($D$) provides a way of solving the non-convex ($P$) when $\Lambda^*-\RR \succeq 0$. 


It also follows that for the stationary point $R^*$ we have $\tr{R^*\Lambda^* R^{*T}} = \tr{R^* \RR R^{*T}}$ due to $\eqref{eq:KKT}$.
We further note that if $\Lambda^*-\RR \succeq 0$ then
by definition it is true that
\begin{align}
    x^T \left(\Lambda^*-\RR \right) x \ge 0,
\end{align}
for any $3n$-vector $x$. In particular, for any $R \in \OO{3}^n$, \begin{equation}
\begin{array}{rcl}
0 & \leq & \tr{R(\Lambda^* - \RR) R^T} = \tr{\Lambda^*}-\tr{R\RR R^{T}} \\ & = & \tr{R^*\Lambda^*R^{*T}}-\tr{R\RR R^{T}},
\end{array}
\end{equation}
which shows that $-\tr{R^*\RR R^{*T}} \leq -\tr{R\RR R^{T}}$ for all $R \in \OO{3}^n$, that is, $R^*$ is the global optimum.

\begin{lemma} \label{lem:lambda-cond}
If a stationary point $R^*$ with corresponding Lagrangian multiplier $\Lambda^*$ fulfills $\Lambda^*-\RR \succeq 0$ then:
\begin{enumerate}
    \item There is no duality gap between ($P$) and ($D$).
    \item $R^*$ is a global minimum for ($P$).
\end{enumerate}
\end{lemma}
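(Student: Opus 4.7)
The plan is to unwind stationarity and the psd hypothesis to show that the primal objective at $R^*$ coincides with the dual objective at $\Lambda^*$, and then to apply the same psd bound to every feasible competitor. Both assertions then reduce to elementary trace manipulations that are already sketched in the paragraph preceding the lemma.

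First I would derive the key identity $\tr{R^* \RR R^{*T}} = \tr{\Lambda^*}$. Stationarity~\eqref{eq:KKT} gives $(\Lambda^* - \RR) R^{*T} = 0$, so multiplying on the left by $R^*$ and taking the trace yields $\tr{R^* \Lambda^* R^{*T}} = \tr{R^* \RR R^{*T}}$. Since $\Lambda^*$ is block diagonal with $3 \times 3$ blocks as in~\eqref{eq:lambdadef} and $R^* \in \SO{3}^n$ satisfies $R_i^{*T} R_i^* = I$, the cyclic property of the trace together with the block structure collapses $\tr{R^* \Lambda^* R^{*T}} = \sum_i \tr{\Lambda_i^* R_i^{*T} R_i^*} = \sum_i \tr{\Lambda_i^*} = \tr{\Lambda^*}$. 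This already yields part~1: the psd hypothesis makes $\Lambda^*$ feasible in $(D)$ with dual value $-\tr{\Lambda^*}$, which equals the primal value $-\tr{R^* \RR R^{*T}}$ attained by $R^*$, so weak duality closes the gap.

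For part~2, take any competitor $R \in \SO{3}^n$. The psd assumption gives
\begin{equation}
0 \leq \tr{R(\Lambda^* - \RR) R^T} = \tr{R \Lambda^* R^T} - \tr{R \RR R^T},
\end{equation}
and the same block-diagonal manipulation collapses $\tr{R \Lambda^* R^T}$ into $\tr{\Lambda^*} = \tr{R^* \RR R^{*T}}$. Rearranging produces $-\tr{R^* \RR R^{*T}} \leq -\tr{R \RR R^T}$ for every feasible $R$, which is global optimality.

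The only genuinely subtle point is the reduction $\tr{R \Lambda^* R^T} = \tr{\Lambda^*}$: the off-diagonal blocks of $R^T R$ are annihilated by the zero off-diagonal blocks of $\Lambda^*$, so only the per-block orthogonality constraints $R_i^T R_i = I$ are used. In particular the argument works verbatim on the larger set $\OO{3}^n$, which matches the discussion preceding the lemma and will be important later when relating $(P)$ to $(P')$.
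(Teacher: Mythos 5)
Your proof is correct and takes essentially the same route as the paper: both establish $\tr{\Lambda^*}=\tr{R^*\RR R^{*T}}$ from stationarity (you via tracing $(\Lambda^*-\RR)R^{*T}=0$ plus the block-diagonal/orthogonality collapse, the paper via the closed form \eqref{eq:lambda-solution}, which is the same computation), then invoke weak duality for part~1 and the nonnegativity of $\tr{R(\Lambda^*-\RR)R^T}$ for part~2. Your closing remark that the argument holds verbatim over $\OO{3}^n$ matches the paper's statement of the optimality inequality for all $R\in\OO{3}^n$.
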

In the remainder of this paper we will study under which conditions $\Lambda^*-\RR \succeq 0$ holds and derive an efficient implementation for solving ($D$).

\section{Main Result}

In this section, we will state our main result which gives error bounds that guarentee that that strong duality holds for our primal and dual problems. From a practical point of view, the result means that it is possible to solve a convex semidefinite program and obtain the globally optimal solution to our non-convex problem, which is quite remarkable. 

\subsection{Strong Duality Theorem \label{sec:strongduality}}
Returning to our initial, primal rotation averaging problem \eqref{rotavgmain}. 
The goal is to find rotations $R_i$ and $R_j$ such that the sum of the residuals $\|R_i\Rij - R_j\|_F^2$ is minimized.
For strong duality to hold, we need to bound the residual error.
\begin{theorem}[Strong Duality]
\label{thm:strongduality1}
Let $R_i^*$, $i=1, \ldots, n$ denote a stationary point to the primal problem ($P$) for a connected camera graph $G$ with Laplacian $L_G$. Let $\alpha_{ij}$ denote the angular residuals, i.e., $\alpha_{ij} = \angle(R^*_i\Rij,R^*_j)$.
Then $R_i^*$, $i=1, \ldots, n$ will be globally optimal and strong duality will hold for ($P$) if
\begin{align} \label{eq:alpha-max0}
|\alpha_{ij}| \le \alpha_{\max} \quad \forall (i,j)\in E,
\end{align}
where \vspace{-4mm}
\begin{align} \label{eq:alpha-max}
\alpha_{\max} = 2\arcsin\left( \sqrt{\frac{1}{4}+\frac{\lambda_2(L_G)}{2d_{\max}}}-\frac{1}{2}\right),
\end{align}
and $d_{\max}$ is the maximal vertex degree.
\end{theorem}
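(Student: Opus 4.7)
By Lemma~\ref{lem:lambda-cond} it suffices to establish $\Lambda^* - \RR \succeq 0$ under the residual bound \eqref{eq:alpha-max0}. My approach is to conjugate by the candidate solution so that the resulting matrix becomes a small perturbation of $L_G \otimes I_3$; the spectral gap $\lambda_2(L_G)$ then absorbs that perturbation.

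Introduce the orthogonal block-diagonal matrix $Q = \mathrm{diag}(R_1^*,\ldots,R_n^*)$ and set $M = \Lambda^* - \RR$, $\tilde M = Q M Q^T$; since $Q$ is orthogonal, $M \succeq 0 \Leftrightarrow \tilde M \succeq 0$. A direct calculation using \eqref{eq:lambda-solution} yields $\tilde M_{ii} = \sum_k a_{ik} S_{ik}$ and $\tilde M_{ij} = -a_{ij} S_{ij}$ for $i \ne j$, where $S_{ij} := R_i^* \RR_{ij} R_j^{*T}$ is a rotation by the residual angle $\alpha_{ij}$. Since the Lagrange multipliers $\Lambda_i^*$ attached to the symmetric constraints $R_i^T R_i = I$ must themselves be symmetric, $\tilde M_{ii}$ is symmetric, which forces $\sum_k a_{ik} S_{ik}^{\mathrm{asym}} = 0$; only the symmetric parts survive on the diagonal. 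Writing $E_{ij} := S_{ij} - I$ then produces the decomposition $\tilde M = L_G \otimes I_3 + \tilde E$ with $\tilde E_{ii} = \sum_k a_{ik} E_{ik}^{\mathrm{sym}}$ and $\tilde E_{ij} = -a_{ij} E_{ij}$. A quick check shows that the block row sums of $\tilde M$ vanish, so $\mathbf{1} \otimes \R^3 \subseteq \ker(\tilde M)$; on its orthogonal complement $x^T (L_G \otimes I_3) x \ge \lambda_2(L_G) \|x\|^2$, so the problem reduces to proving $\|\tilde E\| \le \lambda_2(L_G)$.

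The required estimate is a block Gershgorin bound. From the axis--angle formula $S = \cos\alpha\, I + (1-\cos\alpha) vv^T + \sin\alpha\, [v]_\times$ one reads off $E_{ij}^{\mathrm{sym}} = -2\sin^2(\alpha_{ij}/2)(I - vv^T) \preceq 0$ with $\|E_{ij}^{\mathrm{sym}}\| = 2\sin^2(\alpha_{ij}/2)$ and $\|E_{ij}\| = 2|\sin(\alpha_{ij}/2)|$. Setting $s = \sin(\alpha_{\max}/2)$, the triangle inequality gives $\|\tilde E_{ii}\| \le 2 s^2 d_i$ and $\|\tilde E_{ij}\| \le 2 s$ per neighbour, and block Gershgorin then yields $\|\tilde E\| \le \max_i \sum_j \|\tilde E_{ij}\| \le 2 s(s+1) d_{\max}$. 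Requiring $2 s(s+1) d_{\max} \le \lambda_2(L_G)$ and solving the quadratic $s^2 + s \le \lambda_2(L_G)/(2 d_{\max})$ for its positive root gives $s \le \sqrt{1/4 + \lambda_2(L_G)/(2 d_{\max})} - 1/2$, which is exactly the bound \eqref{eq:alpha-max}.

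The main obstacle will be the bookkeeping in the change of basis --- verifying carefully that KKT stationarity symmetrizes the diagonal blocks of $\tilde M$ (so that the antisymmetric parts of the $S_{ik}$ cancel) and that $\mathbf{1} \otimes \R^3$ lies in $\ker \tilde M$ (so the Laplacian spectral gap is genuinely available on the orthogonal complement). Once these identifications are in place, the $\|\tilde E\|$ estimate and the resulting quadratic inequality in $s$ are routine.
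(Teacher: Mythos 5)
Your proposal is correct and follows essentially the same route as the paper: conjugate $\Lambda^*-\RR$ by the block-diagonal of the $R_i^*$, write the result as $L_G\otimes I_3$ plus a perturbation, bound the perturbation's diagonal blocks by $2d_i\sin^2(\alpha_{\max}/2)$ and off-diagonal blocks by $2\sin(\alpha_{\max}/2)$ via a block-Gershgorin argument, and solve the resulting quadratic in $s=\sin(\alpha_{\max}/2)$ against the Fiedler value. The only cosmetic difference is that you work directly on the orthogonal complement of $\mathbf{1}\otimes\R^3$ while the paper adds $\mu NN^T$ to lift the trivial kernel; these are equivalent.
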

Note that any local minimizer that fulfills this error bound will be global, and conversely there are no non-global minimizers with error residuals fulfilling \eqref{eq:alpha-max0}. 
It is clear that~\eqref{eq:alpha-max} will give a positive bound $\alpha_{\max}$ for any graph. Thus for any given problem instance, $\alpha_{\max}$ gives an explicit bound on the error residuals for which strong duality is guaranteed to hold. The strength of the bound will depend on the particular graph connectivity encapsulated by the Fiedler value $\lambda_2(L_G)$ and the maximal vertex degree $d_{\max}$. We will see that for tightly connected graphs the bound ensures strong duality under surprisingly generous noise levels. In \cite{Wilson2016} it was observed that local convexity at a point holds under similar circumstances.

\vspace{-2mm}
\paragraph{Example. } Consider a graph with $n=3$ vertices that are connected, and all degrees are equal, $d_{max}=2$. Now from the Laplacian matrix $L_G$, one easily finds that $\lambda_2=3$. This gives $\alpha_{\max}=\frac{\pi}{3}\mathrm{rad} = 60^\circ$. So, any local minimizer which has angular residuals less than $60^\circ$ is also a global solution.

\begin{figure}[t]
\begin{center}
\includegraphics[width=55mm]{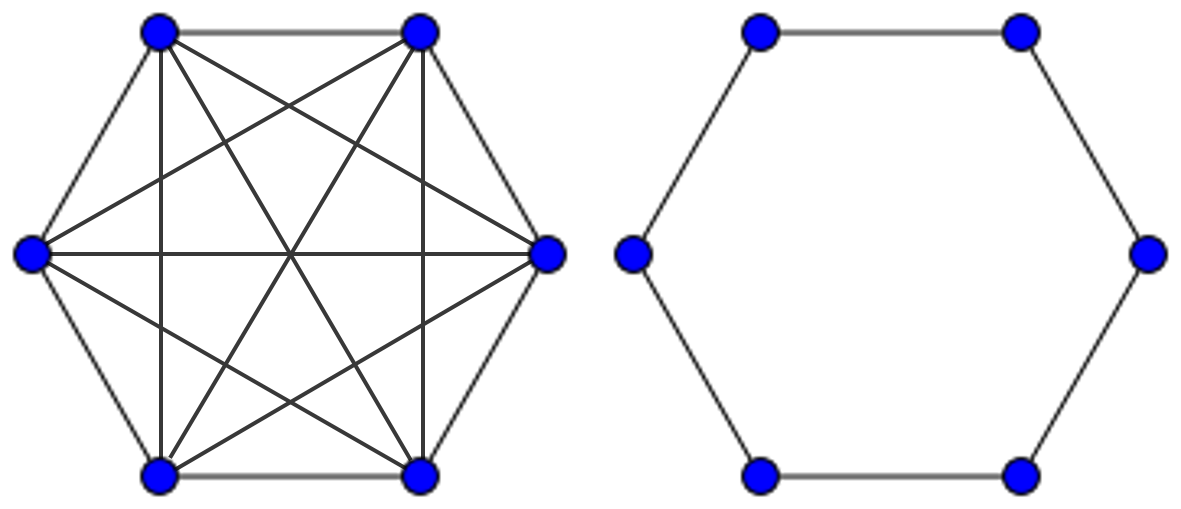}
\caption{A complete graph (left) and a cycle graph (right), both with 6 vertices.}
\label{fig:graphplot}
\end{center}
\end{figure}

\vspace{-2mm}
\paragraph{Complete graphs. }
Let us turn to a more general class of graphs, namely complete graphs with $n$ vertices, see Fig.~\ref{fig:graphplot}. As every pair of vertices is connected, it follows that $d_{\max}=n-1$. Further, it is well-known (and easy to show) that $\lambda_2(L_G)=n$, see \cite{fiedler-1973}. Again, for $n=3$, we retrieve $\alpha_{\max}=\frac{\pi}{3}\mathrm{rad}$. As $n$ becomes larger, we get a decreasing series of upper bounds which in the limit tends to $2\arcsin (\frac{\sqrt{3}-1}{2}) \approx 0.749 \mathrm{rad} = 42.9^\circ$. Hence, as long as the residual angular errors are less than $42.9^\circ$ - which is quite generous from a practical point of view - we can compute the optimal solution via a convex program. Also note that this bound holds independently of $n$.

\vspace{-1mm}
\begin{corollary} For a complete graph $G$ with $n$ vertices, the residual upper bound $\alpha_{\max} = 2\arcsin (\frac{\sqrt{3}-1}{2}) \approx 0.749 \mathrm{rad} = 42.9^\circ$ ensures global optimality and strong duality for any $n$.
\end{corollary}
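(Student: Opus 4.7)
The plan is to specialize the general bound~\eqref{eq:alpha-max} from Theorem~\ref{thm:strongduality1} to the complete graph on $n$ vertices and then show that the resulting sequence of bounds, as $n$ varies, is always at least the claimed value, with equality only in the limit.

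First, I would substitute the two graph-theoretic quantities for the complete graph $K_n$: every vertex is incident to every other vertex, so $d_{\max}=n-1$, and the Laplacian of $K_n$ has spectrum $\{0,n,n,\ldots,n\}$, whence the Fiedler value is $\lambda_2(L_G)=n$. Both facts are recalled in the paragraph preceding the corollary. Plugging these into~\eqref{eq:alpha-max} gives
\begin{equation}
\alpha_{\max}(n) \;=\; 2\arcsin\!\left(\sqrt{\tfrac{1}{4}+\tfrac{n}{2(n-1)}}-\tfrac{1}{2}\right).
\end{equation}

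Next, I would analyze the function $n\mapsto \alpha_{\max}(n)$ for $n\ge 2$. The key observation is that $\tfrac{n}{2(n-1)}=\tfrac{1}{2}+\tfrac{1}{2(n-1)}$ is a strictly decreasing function of $n$, tending to $\tfrac{1}{2}$ as $n\to\infty$. Since $\sqrt{\,\cdot\,}$ and $\arcsin(\,\cdot\,)$ are both increasing on the relevant ranges, $\alpha_{\max}(n)$ is itself strictly decreasing in $n$, so the infimum over all $n$ is attained in the limit $n\to\infty$. Taking that limit,
\begin{equation}
\lim_{n\to\infty}\alpha_{\max}(n) \;=\; 2\arcsin\!\left(\sqrt{\tfrac{1}{4}+\tfrac{1}{2}}-\tfrac{1}{2}\right) \;=\; 2\arcsin\!\left(\tfrac{\sqrt{3}-1}{2}\right),
\end{equation}
which is the number appearing in the statement.

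Combining these two steps, for every $n\ge 2$ and every stationary point $R^*$ of $(P)$ on $K_n$ whose residuals satisfy $|\alpha_{ij}|\le 2\arcsin(\tfrac{\sqrt{3}-1}{2})$, we have a fortiori $|\alpha_{ij}|\le \alpha_{\max}(n)$, so Theorem~\ref{thm:strongduality1} applies and yields global optimality of $R^*$ together with no duality gap between $(P)$ and $(D)$. The only nontrivial step is the monotonicity argument and the limit evaluation; both are elementary, so I do not expect a real obstacle here—the content of the corollary is essentially that $n/(2(n-1))$ is minimized at $n=\infty$ with value $1/2$, and the rest is bookkeeping.
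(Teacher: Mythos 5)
Your proposal is correct and follows essentially the same route as the paper: substitute $d_{\max}=n-1$ and $\lambda_2(L_G)=n$ into \eqref{eq:alpha-max}, observe that the resulting bound is decreasing in $n$ with limit $2\arcsin\bigl(\tfrac{\sqrt{3}-1}{2}\bigr)$, and conclude that this limiting value is a uniform lower bound on $\alpha_{\max}(n)$ so Theorem~\ref{thm:strongduality1} applies for every $n$. No gaps.
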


\vspace{-2mm}
\paragraph{Cycle graphs. }
Now consider the other spectrum in terms of graph connectivity, namely cycle graphs. A cycle graph has a single cycle, or in other words, every vertex in the camera graph has degree two ($d_{\max}=2$) and the vertices form a closed chain (Fig.~\ref{fig:graphplot}). From the literature, we have that the Fiedler value $\lambda_2 = 2(1-\cos{\frac{2\pi}{n}})$. Inserting into \eqref{eq:alpha-max} and simplifying, we get $\alpha_{\max}= 2\arcsin\left(\sqrt{\frac{1}{4}+\sin^2(\frac{\pi}{n})}-\frac{1}{2}\right)$. Again, for $n=3$, we retrieve $\alpha_{\max}=\frac{\pi}{3}\mathrm{rad}$. For larger values of $n$, the upper bound decreases rapidly. In fact, the upper bound is quite conservative and it is possible to show a much stronger upper bound using a different analysis. In the appendix, we prove the following theorem.

\begin{theorem}
\label{thm:strongduality2}
Let $R_i^*$, $i=1, \ldots, n$ denote a stationary point to the primal problem ($P$) for a cycle graph with $n$ vertices. Let $\alpha_{ij}$ denote the angular residuals, i.e., $\alpha_{ij} = \angle(R^*_i\Rij,R^*_j)$.
Then, $R_i^*$, $i=1, \ldots, n$ will be globally optimal and strong duality will hold for ($P$) if 
 $|\alpha_{ij}| \le \frac{\pi}{n}$ for all $(i,j)\in E$.
\end{theorem}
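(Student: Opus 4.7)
The plan is to invoke Lemma~\ref{lem:lambda-cond} and reduce the statement to proving $M := \Lambda^* - \RR \succeq 0$. Passing to the orthogonal conjugate $N := \Xh M \Xh^T$, where $\Xh = \mathrm{blkdiag}(R_1^*, \ldots, R_n^*)$, preserves PSD-ness. Setting $G_i := R_i^* \RR_{i,i+1} R_{i+1}^{*T}$, which is a rotation of angle $\alpha_{i,i+1}$ by definition of the chordal distance, a direct block calculation using \eqref{eq:lambda-solution} gives $N_{i,i+1} = -G_i$, $N_{i,i-1} = -G_{i-1}^T$, and $N_{ii} = G_{i-1}^T + G_i$, with all indices read cyclically modulo $n$.

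The critical observation, specific to the cycle, is that stationarity forces every $G_i$ to coincide. Since $R^*$ is a KKT point, the diagonal block $\Lambda_i^*$ of the Lagrange multiplier is symmetric, hence so is $N_{ii} = G_{i-1}^T + G_i$. Equating with its transpose yields $G_i - G_i^T = G_{i-1} - G_{i-1}^T$ for every $i$, so all $G_i$ share a common skew-symmetric part. The hypothesis $\alpha_{i,i+1} \le \pi/n$ implies $\alpha_{i,i+1} < \pi/2$ (for $n \ge 3$), and in that regime a rotation is uniquely recovered from its skew part, so every $G_i$ equals a single rotation $G$ of angle $\alpha \le \pi/n$ about a common axis $u$.

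With all $G_i = G$, the matrix $N$ is block-circulant and the block-DFT $F \otimes I_3$ block-diagonalises it into Hermitian $3 \times 3$ blocks
\[
\hat N_k \;=\; 2(1-\cos\theta_k)\,G^{\mathrm{sym}} \;-\; 2i\sin\theta_k\,G^{\mathrm{skew}}, \qquad \theta_k = \tfrac{2\pi k}{n}.
\]
The $k=0$ block vanishes, accounting for the three-dimensional kernel of $M$ generated by the rows of $R^*$. For $k \ne 0$, working in coordinates in which $u$ is the third axis gives $G^{\mathrm{sym}} = \mathrm{diag}(\cos\alpha,\cos\alpha,1)$, and the eigenvalues of $\hat N_k$ work out to $2(1-\cos\theta_k)$ together with $2\bigl(\cos\alpha - \cos(\theta_k \pm \alpha)\bigr)$. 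Non-negativity is therefore equivalent to $\cos\alpha \ge \cos(\theta_k - \alpha)$ for every $k = 1,\ldots,n-1$, whose binding case is $k = 1$ and reduces to exactly $\alpha \le \pi/n$. Hence $N \succeq 0$, so $M \succeq 0$, and Lemma~\ref{lem:lambda-cond} yields strong duality and global optimality. The main obstacle is the rigidity step above: recognising that in the cycle case the symmetric-multiplier condition collapses the edgewise residual rotations to a single common rotation is what enables the Fourier diagonalisation and produces the sharp bound $\pi/n$ in place of the generic Fiedler-value bound of Theorem~\ref{thm:strongduality1}.
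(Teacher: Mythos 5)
Your proof is correct and follows essentially the same route as the paper's: the decisive step --- that symmetry of the multiplier blocks forces all edge residual rotations to coincide with a single rotation $\E$ of angle $\alpha$ about a common axis --- is exactly the paper's argument, and your block-circulant DFT diagonalisation is just the complex form of the paper's explicit real eigenvectors $v_{\pm}$, producing the same eigenvalues $2\bigl(\cos\alpha-\cos(\tfrac{2\pi k}{n}\pm\alpha)\bigr)=4\sin(\tfrac{\pi k}{n}\pm\alpha)\sin(\tfrac{\pi k}{n})$ and the same binding case $k=1$. If anything your version is slightly more complete, since you account for all $3n$ eigenvalues (including the $2(1-\cos\tfrac{2\pi k}{n})$ ones along the rotation axis) and you make explicit the $\alpha<\pi/2$ condition needed to recover a rotation from its skew part, both of which the paper leaves implicit.
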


Requiring that the angular residuals $|\alpha_{ij}|$ must be less than $\pi/n$ for the global solution may seem like a restriction, but it is actually not. To see this, note that a non-optimal solution to the rotation averaging problem can be obtained by choosing $R_1$ such that the first residual $\alpha_{12}$ is zero, and then continuing in the same fashion such that all but the last residual $\alpha_{1n}$ in the cycle is zero. In the worst case, $\alpha_{1n} = \pi$. However, this is (obviously) non-optimal. A better solution is obtained if we distribute the angular residual error evenly so that $\alpha_{ij} = \alpha = \frac{\alpha_{1n}}{n}$ (which is always possible, see Theorem~23 in \cite{enqvist-thesis-2011}). In conclusion, the angular residuals $|\alpha_{ij}|$ of the globally optimal solution for a cycle graph is always less than or equal to $\frac{\pi}{n}$, and conversely, if the angular residual is larger than $\frac{\pi}{n}$ for a local minimizer, then it does not correspond to the global solution.

In Fig.~\ref{fig:minima}, we have a real example of an orbital camera motion which is close to a cycle. It may seem hard to determine if the camera motion consists of one or more loops around the object - we give three different local minima for this example. Still, applying formula (\ref{eq:alpha-max}) for this instance gives $\alpha_{\max}=8.89^{\circ}$ which is typically sufficient in practice to ensure that the optimal solution can be obtained by solving a convex program. Before developing an actual algorithm, we shall prove our main result on strong duality.

\subsection{Proof of Theorem~\ref{thm:strongduality1}}

Recall that a sufficient condition for strong duality to hold is that $\Lambda^*-\RR \succeq 0$ (Lemma~\ref{lem:lambda-cond}). To prove Theorem~\ref{thm:strongduality1} we will show that this is true under the conditions of the theorem. 

To simplify the presentation we denote the residual rotations $\E_{ij} = R^*_i \RR_{ij} R_j^{*T}$ and define
\begin{equation} \label{eq:DR}
D_{R^*}  = \begin{bmatrix}
R_1^* & 0 & 0 & \hdots \\
 0 & R_2^* & 0 & \hdots \\
 0 & 0 & R_3^* & \hdots \\
\vdots & \vdots & \vdots & \ddots \\
\end{bmatrix}.
\end{equation}
Then $D_{R^*}(\Lambda^* - \RR)D_{R^*}^T = $
\begin{equation} \label{eq:bigeps}
\begin{bmatrix}
\sum_{j\neq 1}a_{1j}\E_{1j} & -a_{12}\E_{12} & -a_{13}\E_{13} & \hdots \\
-a_{12}\E^T_{12} & \sum_{j\neq 2}a_{2j}\E_{2j} & -a_{23}\E_{23} & \hdots \\
-a_{13}\E^T_{13} &  -a_{23}\E^T_{23} & \sum_{j\neq 3}a_{3j}\E_{3j} & \hdots \\
\vdots & \vdots & \vdots & \ddots \\
\end{bmatrix}.
\end{equation}
Note that $\sum_{j\neq i}a_{ij}\E_{ij} = \frac{1}{2} \sum_{j\neq i}a_{ij}(\E_{ij}+\E^T_{ij})$ by symmetry of $\Lambda^*$. Since $D_{R^*}$ is orthogonal, the matrix $\Lambda^* -\RR$ is positive semidefinite if and only if $D_{R^*}(\Lambda^* - \RR)D_{R^*}^T$ is.

In the noise free case we note that the residual rotations will fulfill $\E_{ij} = I$
and therefore
\begin{equation}
D_{R^*}(\Lambda^* - \RR)D_{R^*}^T = L_G\otimes I_3.
\end{equation}
In the general noise case our strategy will therefore be to bound the eigenvalues of 
$D_{R^*}(\Lambda^* - \RR)D_{R^*}^T$ by those of $L_G$ for which well-known estimates exist. Thus, we will analyze the difference and define the matrix
\begin{equation}
\Delta =  D_{R^*}(\Lambda^* - \RR)D_{R^*}^T-L_G \otimes I_3.
\label{eq:Deltadef}
\end{equation}
The following results characterize the eigenvalues of $\Delta$.
\begin{lemma} \label{lemma:delta1}
Let $\Delta_{ij}$, $i=1,...,n$, $j=1,...,n$ be the $3 \times 3$ sub-blocks of $\Delta$. 
If $\lambda$ is an eigenvalue of $\Delta$ then 
\begin{equation}
|\lambda| \leq \sum_{j=1}^n \|\Delta_{ij}\| \quad \mbox{  for some } i=1,\ldots, n.
\end{equation}
\end{lemma}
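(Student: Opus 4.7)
The plan is to prove this as a block version of the Gershgorin circle theorem, with operator $2$-norms of the $3\times 3$ sub-blocks playing the role of absolute values of scalar entries.

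First, I would pick an eigenpair. Let $v \neq 0$ be an eigenvector of $\Delta$ with eigenvalue $\lambda$, and partition $v$ conformally with the block structure of $\Delta$, writing $v = (v_1^T, v_2^T, \ldots, v_n^T)^T$ with $v_i \in \R^3$. The $i$-th block row of the eigenvalue equation $\Delta v = \lambda v$ reads
\begin{equation}
\sum_{j=1}^n \Delta_{ij} v_j = \lambda v_i.
\end{equation}
Taking Euclidean norms on both sides and using the triangle inequality together with the definition of the operator $2$-norm, I obtain
\begin{equation}
|\lambda|\, \|v_i\| \;\le\; \sum_{j=1}^n \|\Delta_{ij}\|\, \|v_j\|.
\end{equation}

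Next I would select the dominant block. Since $v \neq 0$, there exists an index $i \in \{1, \ldots, n\}$ such that $\|v_i\| = \max_k \|v_k\| > 0$. For this choice of $i$, every factor $\|v_j\|$ on the right is bounded above by $\|v_i\|$, so
\begin{equation}
|\lambda|\, \|v_i\| \;\le\; \|v_i\| \sum_{j=1}^n \|\Delta_{ij}\|.
\end{equation}
Dividing through by $\|v_i\| > 0$ yields the desired bound $|\lambda| \le \sum_{j=1}^n \|\Delta_{ij}\|$ for that particular $i$.

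This proof is essentially routine once one recognizes the block Gershgorin pattern; there is no serious obstacle. The only mildly delicate point is making sure the operator norm $\|\Delta_{ij}\|$ really does bound $\|\Delta_{ij} v_j\|/\|v_j\|$ in the Euclidean sense, which is exactly the definition of the operator $2$-norm used throughout the paper. Note that the statement intentionally includes the diagonal term $\|\Delta_{ii}\|$ in the sum (rather than subtracting it off as in the classical scalar Gershgorin theorem), which makes the proof slightly cleaner and suffices for the subsequent spectral comparison between $D_{R^*}(\Lambda^* - \RR) D_{R^*}^T$ and $L_G \otimes I_3$.
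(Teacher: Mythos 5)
Your proof is correct and follows essentially the same route as the paper's: both are block-Gershgorin arguments that take the block row of $\Delta v = \lambda v$ corresponding to the sub-vector of maximal norm, apply the triangle inequality with operator $2$-norms, and divide by $\|v_i\|>0$. No gaps.
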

\begin{proof}
The proof is similar to that of Gerschgorin's theorem \cite{feingold1962}.
Let $\Delta x = \lambda x$, with $\|x\|=1$. Then
$\lambda x_i = \sum_j \Delta_{ij} x_j.$
Now pick $i$ such that $\|x_i\| \geq \|x_j\|$ for all $j$.
Then
\begin{equation}
|\lambda| = \left\| \lambda \frac{x_i}{\|x_i\|} \right\| =  \left\|\sum_{j=1}^n \Delta_{ij}  \frac{x_j}{\|x_i\|} \right\| \leq \sum_{j=1}^n \|\Delta_{ij}\|.
\end{equation}
\vspace{-4mm}
\end{proof}

\begin{lemma}\label{lemma:delta2} Denote $\alpha_{\max}$ the largest (absolute) residual angle of all $\E_{ij}$ and assume $0\le \alpha_{\max} \leq \frac{\pi}{2}$. Then
\begin{equation}
\|\Delta_{ii}\| \leq 2 d_i\sin^2(\frac{\alpha_{\max}}{2}) \quad \forall i=1,\ldots n,
\end{equation}
where $d_i$ is the degree of vertex $i$.
\end{lemma}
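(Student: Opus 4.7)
The plan is to directly compute $\Delta_{ii}$ from the definition and reduce the bound to a spectral statement about a single rotation's symmetric part.

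First I would extract $\Delta_{ii}$ in closed form. From \eqref{eq:bigeps}, the $i$-th diagonal block of $D_{R^*}(\Lambda^*-\RR)D_{R^*}^T$ is $\sum_{j\neq i} a_{ij}\E_{ij}$, while the $i$-th diagonal block of $L_G\otimes I_3$ is $d_i I$. Since $d_i = \sum_{j\neq i} a_{ij}$, this gives
\begin{equation}
\Delta_{ii} \;=\; \sum_{j\neq i} a_{ij}(\E_{ij}-I).
\end{equation}
The symmetry of $\Lambda^*$ (as noted in the text right after \eqref{eq:bigeps}) lets me replace $\E_{ij}$ by its symmetric part, so
\begin{equation}
\Delta_{ii} \;=\; \sum_{j\neq i} a_{ij}\!\left(\tfrac{\E_{ij}+\E_{ij}^T}{2}-I\right).
\end{equation}

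Next I would analyze a single summand spectrally. Because $\E_{ij}\in\SO{3}$ is a rotation by angle $\alpha_{ij}$, its eigenvalues are $1, e^{\mathrm{i}\alpha_{ij}}, e^{-\mathrm{i}\alpha_{ij}}$, and therefore $\tfrac{1}{2}(\E_{ij}+\E_{ij}^T)$ has eigenvalues $1,\cos\alpha_{ij},\cos\alpha_{ij}$. Consequently the symmetric matrix $\tfrac{1}{2}(\E_{ij}+\E_{ij}^T)-I$ has eigenvalues $0,\cos\alpha_{ij}-1,\cos\alpha_{ij}-1$, so its operator norm is
\begin{equation}
\left\|\tfrac{\E_{ij}+\E_{ij}^T}{2}-I\right\| \;=\; 1-\cos\alpha_{ij} \;=\; 2\sin^2(\alpha_{ij}/2).
\end{equation}
Since $|\alpha_{ij}|\le\alpha_{\max}\le\pi/2$, the function $\sin^2(\cdot/2)$ is monotone in that range, so this quantity is bounded by $2\sin^2(\alpha_{\max}/2)$.

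Finally I would assemble the bound via the triangle inequality for the operator norm:
\begin{equation}
\|\Delta_{ii}\| \;\le\; \sum_{j\neq i} a_{ij}\left\|\tfrac{\E_{ij}+\E_{ij}^T}{2}-I\right\| \;\le\; 2\sin^2(\alpha_{\max}/2)\sum_{j\neq i}a_{ij} \;=\; 2d_i\sin^2(\alpha_{\max}/2).
\end{equation}
There is not really a hard step here; the main thing to be careful about is invoking the symmetrization to turn each $\E_{ij}-I$ (whose norm would only be bounded by $2\sin(\alpha_{ij}/2)$, the wrong order) into its symmetric part (whose norm is the desired $O(\sin^2)$ quantity). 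That quadratic-in-sine bound is precisely what will be needed downstream to combine with the Fiedler value and yield Theorem~\ref{thm:strongduality1}.
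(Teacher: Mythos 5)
Your proof is correct and follows essentially the same route as the paper: identify $\Delta_{ii}=\sum_{j\neq i}a_{ij}\bigl(\tfrac{1}{2}(\E_{ij}+\E_{ij}^T)-I\bigr)$, observe that the symmetrized residual rotation has eigenvalues $1,\cos\alpha_{ij},\cos\alpha_{ij}$, and sum the resulting per-edge bound $1-\cos\alpha_{ij}=2\sin^2(\alpha_{ij}/2)$ over the $d_i$ incident edges. The only cosmetic difference is that the paper combines the summands via the Loewner ordering $d_i(\cos\alpha_{\max}-1)I\preceq\Delta_{ii}\preceq 0$ while you use the triangle inequality for the operator norm; both yield the identical bound.
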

\begin{proof} It is easy to see that by applying a change of coordinates $\E_{ij}$ can be written
\begin{equation}
\E_{ij} = V_{ij} \left[\begin{matrix}
\cos(\alpha_{ij}) & -\sin(\alpha_{ij}) & 0 \\
\sin(\alpha_{ij}) & \cos(\alpha_{ij}) & 0 \\
0 & 0 & 1
\end{matrix}\right]V_{ij}^T,
\end{equation}
and therefore
\begin{equation}
\frac{1}{2}(\E_{ij}+\E_{ij}^T) = V_{ij} \left[\begin{matrix}
\cos(\alpha_{ij}) & 0 & 0 \\
0 & \cos(\alpha_{ij}) & 0 \\
0 & 0 & 1
\end{matrix}\right]V_{ij}^T.
\end{equation}
This gives 
\begin{equation}
(\cos(\alpha_{ij})-1)I \preceq \frac{1}{2}(\E_{ij}+\E_{ij}^T) - I \preceq 0,
\end{equation}
and since $\Delta_{ii} = \sum_{j \neq i} a_{ij}\left(\frac{1}{2}(\E_{ij}+\E_{ij}^T) - I\right)$ we get
\begin{equation}
d_i(\cos(\alpha_{\max})-1) I \preceq \Delta_{ii} \preceq 0.
\end{equation}
Thus $\|\Delta_{ii}\| \leq d_i (1- \cos(\alpha_{\max}))=2 d_i\sin^2(\frac{\alpha_{\max}}{2})$.
\end{proof}

\begin{lemma}\label{lemma:delta3}
If $0 \le \alpha_{max} \leq \frac{\pi}{2}$ and $i \neq j$ then 
\begin{equation}
\|\Delta_{ij}\| \le 2 a_{ij} \sin(\frac{\alpha_{\max}}{2}).
\end{equation}
\end{lemma}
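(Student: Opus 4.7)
The plan is to write $\Delta_{ij}$ in closed form and then reduce its operator norm to a simple trigonometric quantity by exploiting the rotation structure of $\E_{ij}$.

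First I would identify the off-diagonal blocks explicitly. Inspecting \eqref{eq:bigeps}, for $i\neq j$ the $(i,j)$-block of $D_{R^*}(\Lambda^*-\RR)D_{R^*}^T$ equals $-a_{ij}\E_{ij}$, while the corresponding block of $L_G \otimes I_3$ equals $-a_{ij} I_3$ (since the $(i,j)$-entry of $L_G$ is $-a_{ij}$ for $i\neq j$). Subtracting, I get the clean identity
\begin{equation}
\Delta_{ij} \;=\; a_{ij}\bigl(I - \E_{ij}\bigr).
\end{equation}
So the lemma reduces to bounding $\|I - \E_{ij}\|$ in terms of $\alpha_{ij}$.

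Next I would use exactly the same change-of-basis as in the proof of Lemma~\ref{lemma:delta2}: write
\begin{equation}
I - \E_{ij} \;=\; V_{ij}\!\left[\begin{matrix}
1-\cos(\alpha_{ij}) & \sin(\alpha_{ij}) & 0 \\
-\sin(\alpha_{ij}) & 1-\cos(\alpha_{ij}) & 0 \\
0 & 0 & 0
\end{matrix}\right]\! V_{ij}^T.
\end{equation}
Since $V_{ij}$ is orthogonal, it preserves the operator norm, so I only need the operator norm of the inner matrix. Its top-left $2\times 2$ block is $(1-\cos\alpha_{ij}) I_2 + \sin(\alpha_{ij}) J$ where $J$ is the standard skew-symmetric rotation generator; a direct singular-value calculation gives norm $\sqrt{(1-\cos\alpha_{ij})^2 + \sin^2(\alpha_{ij})} = \sqrt{2-2\cos\alpha_{ij}} = 2\bigl|\sin(\alpha_{ij}/2)\bigr|$, and the trailing $0$ in the third diagonal does not contribute. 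Hence $\|I - \E_{ij}\| = 2\bigl|\sin(\alpha_{ij}/2)\bigr|$.

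Finally, combining the two observations and using monotonicity of $|\sin(\cdot/2)|$ on $[0,\pi/2]$ together with the hypothesis $0\le \alpha_{\max}\le \pi/2$,
\begin{equation}
\|\Delta_{ij}\| \;=\; a_{ij}\,\|I-\E_{ij}\| \;=\; 2 a_{ij}\bigl|\sin(\alpha_{ij}/2)\bigr| \;\le\; 2 a_{ij}\sin\!\bigl(\tfrac{\alpha_{\max}}{2}\bigr),
\end{equation}
which is the claim. The proof is essentially routine; the only step that requires a moment of care is the explicit singular-value computation of $I - R(\alpha_{ij})$, but after the rotation-axis decomposition already used in Lemma~\ref{lemma:delta2} this is a two-by-two computation with no real obstacle.
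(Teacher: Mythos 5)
Your proof is correct and follows essentially the same route as the paper: both reduce the claim to the identity $\|\Delta_{ij}\| = a_{ij}\|I-\E_{ij}\| = 2a_{ij}|\sin(\alpha_{ij}/2)|$ and then invoke monotonicity of the sine. The only cosmetic difference is that you compute the operator norm of $I-\E_{ij}$ by an explicit singular-value calculation in the rotation-adapted basis, whereas the paper bounds $\|v-\E_{ij}v\|$ geometrically over unit vectors $v$ and notes equality for $v$ perpendicular to the rotation axis; both yield the same exact value.
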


\begin{proof}
To estimate the off-diagonal blocks $\|\Delta_{ij}\| = a_{ij}\|I-\E_{ij}\|$ we note that for a unit vector $v$ we have
\begin{eqnarray}
\sqrt{\|v-\E_{ij}v\|^2} &=& \sqrt{\|v\|^2-2\cos \angle(v,\E_{ij}v)+\|\E_{ij}v\|^2} \nonumber \\
&\leq& \sqrt{2(1-\cos(\alpha_{ij}))},
\end{eqnarray}
where $\angle(v,\E_{ij}v)$ is the angle between $v$ and $\E_{ij}v$.
Furthermore, we will have equality if $v$ is perpendicular to the rotation axis of $\E_{ij}$.
Therefore
\begin{equation}
\|\Delta_{ij}\| = a_{ij}\sqrt{2(1-\cos(\alpha_{ij}))} \leq 2 a_{ij}\sin(\frac{\alpha_{\max}}{2}).
\end{equation}
\end{proof}

Summarizing the results in Lemmas~\ref{lemma:delta1}-~\ref{lemma:delta3} we get that the eigenvalues $\lambda$ of $\Delta$ fulfill 
\begin{equation}
\begin{split}
|\lambda(\Delta)| \leq 2 d_i\sin^2(\frac{\alpha_{\max}}{2}) + \sum_{j\neq i} 2 a_{ij} \sin(\frac{\alpha_{\max}}{2}) \\ \leq 2d_{\max} \sin(\frac{\alpha_{\max}}{2})\left(  1+\sin(\frac{\alpha_{\max}}{2})\right),
\end{split}\label{eq:lambdamaxest}
\end{equation}
where $d_{\max}$ is the maximal vertex degree.
Note that the same bound holds for all eigenvalues of $\Delta$, in particular, the one with the largest magnitude $\lambda_{\max}(\Delta)$.

Now returning to our goal of showing that $D_{R^*}(\Lambda^* - \RR)D_{R^*}^T \succeq 0$. Let $N = \begin{bmatrix}
I & I & \hdots 
\end{bmatrix}^T$. 
The columns of $N$ will be in the nullspace of $D_{R^*}(\Lambda^* - \RR)D_{R^*}^T$.
Therefore $D_{R^*}(\Lambda^* - \RR)D_{R^*}^T$ is positive semidefinite if 
$D_{R^*}(\Lambda^* - \RR)D_{R^*}^T + \mu NN^T$ is, and hence it is enough to show that
\begin{equation}
    \lambda_1\left(D_{R^*}(\Lambda^* - \RR)D_{R^*}^T + \mu NN^T\right) \ge 0
\end{equation}
for sufficiently large $\mu$. The Laplacian $L_G$ is positive semidefinite with smallest eigenvalue $\lambda_1=0$ and corresponding eigenvector $v = \left(\begin{matrix} 1& 1& \hdots & 1\end{matrix}\right)^T$. Furthermore, as $N = v \otimes I_3$, it is clear that for 
sufficiently large $\mu$ we have $ \lambda_1(L_G\otimes I_3 + \mu NN^T) = \lambda_1(L_G+\mu vv^T) = \lambda_2(L_G)$.
Since
\begin{equation}
D_{R^*}(\Lambda^* - \RR)D_{R^*}^T + \mu NN^T = L_G\otimes I_3 + \mu NN^T+\Delta,    
\end{equation}
we therefore get
\begin{equation}
\begin{split}
\lambda_1(D_{R^*}(\Lambda^* - \RR)D_{R^*}^T + \mu NN^T) \ge \lambda_2(L_G) - |\lambda_{\max}(\Delta)|.
\end{split}
\end{equation}
If the right-hand side is positive, then so is the left-hand side.
Using \eqref{eq:lambdamaxest} for $\lambda_{\max}(\Delta)$ yields the following result.
\begin{lemma}
The matrix $\Lambda^* - \RR$ is positive semidefinite if
\begin{equation}
\lambda_2(L_G) - 2d_{\max} \sin(\frac{\alpha_{\max}}{2})\left( 1+\sin(\frac{\alpha_{\max}}{2})\right) \geq 0.
\end{equation}
\end{lemma}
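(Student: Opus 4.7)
The proof is essentially a matter of assembling the pieces already developed in the preceding discussion. The plan is to verify that the hypothesis implies the smallest eigenvalue of $D_{R^*}(\Lambda^* - \RR)D_{R^*}^T$ is nonnegative, from which the claim follows by conjugation by the orthogonal matrix $D_{R^*}$.

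First I would recall the decomposition $D_{R^*}(\Lambda^* - \RR)D_{R^*}^T = L_G \otimes I_3 + \Delta$ from \eqref{eq:Deltadef}. Because $N = v \otimes I_3$ (with $v = (1,\ldots,1)^T$) spans the nullspace of $L_G\otimes I_3$, and because $N$ was already seen to lie in the nullspace of $D_{R^*}(\Lambda^* - \RR)D_{R^*}^T$, adding $\mu NN^T$ for sufficiently large $\mu$ perturbs the spectrum only on that three-dimensional invariant subspace. Thus positive semidefiniteness of $\Lambda^*-\RR$ reduces to showing
\begin{equation}
\lambda_1\bigl(L_G \otimes I_3 + \mu NN^T + \Delta\bigr) \geq 0
\end{equation}
for some (hence all sufficiently large) $\mu$.

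Next, I would invoke Weyl's inequality for sums of symmetric matrices to obtain
\begin{equation}
\lambda_1(L_G\otimes I_3 + \mu NN^T + \Delta) \geq \lambda_1(L_G\otimes I_3 + \mu NN^T) - |\lambda_{\max}(\Delta)|.
\end{equation}
For $\mu$ large enough the first term on the right equals $\lambda_2(L_G)$, since $\mu NN^T$ shifts only the three zero eigenvalues of $L_G \otimes I_3$ while leaving the Fiedler eigenvalue (with multiplicity three) unchanged. Substituting the block-Gerschgorin bound \eqref{eq:lambdamaxest}, namely $|\lambda_{\max}(\Delta)| \leq 2 d_{\max}\sin(\alpha_{\max}/2)(1+\sin(\alpha_{\max}/2))$, yields the chain
\begin{equation}
\lambda_1(D_{R^*}(\Lambda^* - \RR)D_{R^*}^T + \mu NN^T) \geq \lambda_2(L_G) - 2d_{\max}\sin(\tfrac{\alpha_{\max}}{2})\bigl(1+\sin(\tfrac{\alpha_{\max}}{2})\bigr),
\end{equation}
and the hypothesis of the lemma forces the right-hand side to be nonnegative.

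The main obstacle — the quantitative control of $\lambda_{\max}(\Delta)$ — has already been handled in Lemmas~\ref{lemma:delta1}--\ref{lemma:delta3}; the present lemma is essentially a packaging of those estimates. The only delicate bookkeeping is to justify that $\mu$ can be taken large enough without disturbing the three lowest eigenvalues above zero: this follows from the Kronecker structure $N = v\otimes I_3$ together with the standard fact that $\lambda_1(L_G + \mu vv^T) = \lambda_2(L_G)$ whenever $\mu$ exceeds the spectral radius of $L_G$, and the value of $\mu$ is irrelevant to the conclusion since the nullspace of $D_{R^*}(\Lambda^*-\RR)D_{R^*}^T$ is retained in the sum.
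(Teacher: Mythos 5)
Your proposal is correct and follows essentially the same route as the paper: conjugate by the orthogonal matrix $D_{R^*}$, add $\mu NN^T$ to lift the common nullspace, identify $\lambda_1(L_G\otimes I_3+\mu NN^T)=\lambda_2(L_G)$ for large $\mu$, and apply Weyl's inequality together with the bound \eqref{eq:lambdamaxest} on $|\lambda_{\max}(\Delta)|$. The only difference is cosmetic — you name Weyl's inequality explicitly where the paper leaves it implicit, and you spell out the reduction to the shifted matrix a bit more carefully.
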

By completing squares, one obtains the equivalent condition
\begin{equation}
\left(\sin(\frac{\alpha_{\max}}{2})+\frac{1}{2}\right)^2 \le
\frac{\lambda_2(L_G)}{2d_{\max}}+\frac{1}{4},
\end{equation}
which shows Theorem~\ref{thm:strongduality1}.

\section{Solving the Rotation Averaging Problem \label{sec:algorithm}}

The dual problem ($D$) is a convex semidefinite program, 
and although it is theoretically sound and 
provably solvable in polynomial time by interior point methods 
\cite{boyd-vandenberghe-book-2004}, in practice such problems quickly become intractable as the dimension of the entering variables grow. 

In this section we present a first-order method for solving semidefinite programs 
with constant block diagonals. Our approach solves the dual of ($D$) and consists of two simple matrix operations only, matrix multiplication and 
square roots of $3\times 3$ symmetric matrices, the latter which can be solved in closed form. 
Consequently, these two operations permit a simple and efficient implementation
without the need for dedicated numerical libraries. 

The dual of ($D$) is given by
\begin{equation}
\min_{Y \succeq 0} \max_{\Lambda} -\tr{\Lambda} + \tr{Y(\Lambda-\RR)}.
\end{equation}
Let the matrix $Y$ be partitioned as follows, 
\begin{align}
Y=\matris{
Y_{11} & Y_{12} & \hdots & Y_{1n} \\ 
Y_{12}^T & Y_{22} & \hdots & Y_{2n} \\ 
\vdots & \vdots & \ddots & \vdots \\ 
Y_{1n}^T &  \hdots  & \hdots & Y_{nn} \\ 
}
\label{partitioning}
\end{align}
where each block $Y_{ij}\in \R^{3\times 3}$ for $i,j=1,\ldots,n$.
Since $\Lambda$ is block-diagonal \eqref{eq:lambdadef} it is clear that the inner maximization is unbounded when $Y_{ii} - I_{3\times 3}\neq 0$ and zero otherwise. We therefore get 
\begin{equation}
\label{rotavgDDr}
\begin{array}{lll}
(DD) & \min\limits_Y & 
-\tr{\RR Y} \\ 
& \st
& Y_{ii} =I_3,  \quad i=1,...,n,  \\
& & Y \succeq 0. 
\end{array}
\end{equation}
Since $Y \succeq 0$ it is clear that
$$
-\tr{\Lambda} + \tr{Y(\Lambda-R^*)} \geq -\tr{\Lambda},
$$
for all $\Lambda$ of the form \eqref{eq:lambdadef}. Therefore $(DD) \geq (D)$ and assuming strong duality 
holds $(D) = (P)$. Furthermore if $R^*$ is the global optimum of $(P)$ then $Y = {R^*}^T R^*$ is feasible in 
\eqref{rotavgDDr} which shows that $(DD) = (P)$.

Thus, when strong duality holds, recovering a primal solution to $(P)$ is then achieved by simply  
reading off the first three rows of $Y^*$ and choosing their signs to ensure positive determinants 
of the resulting rotation matrices, see supplementary material for further details. 


\subsection{Block Coordinate Descent}

In this section we present a block coordinate descent method for solving semidefinite programs with 
block diagonal constraints on the form \eqref{rotavgDDr}. This method is a generalization of the 
row-by-row algorithms derived in \cite{wen2009row}. 

Consider the following semidefinite program, 
\begin{equation}
\label{rotavgblock}
\begin{array}{ll}
\min\limits_{\mathclap{S \in \R^{3n\times 3}}} & \hspace{2mm} \tr{\AA^TS} \\
\st 
& \hspace{2mm} \matris{
I  & S^T   \\ 
S & B 
} \succeq 0.  
\end{array}
\end{equation}
This is a subproblem that arises when attempting to solve ($DD$) in \eqref{rotavgDDr} using a 
block coordinate descent approach, i.e., by fixing all but one row and column of blocks 
in \eqref{partitioning} and reordering as necessary. 
It turns out that this subproblem has a particularly simple, closed form solution, established by the following lemma.
\begin{lemma}
\label{lemmaavgblocksol}
Let $B$ be a positive semidefinite matrix. Then, the solution to \eqref{rotavgblock} 
is given by, 
\begin{align}
S^*= -B \AA \left[ \Big(\AA^T B \AA \Big)^{\frac{1}{2}} \right]^\dagger.  
\label{rotavgblocksol}
\end{align}
\end{lemma}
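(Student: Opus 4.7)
The plan is to reduce the SDP to a linear objective over an operator-norm ball, and then identify the minimizer via von Neumann's trace inequality.

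\emph{Step 1 (Schur complement).} Since the top-left block is $I\succ 0$, the Schur complement criterion gives
\begin{equation*}
\matris{I & S^T \\ S & B} \succeq 0 \iff B - SS^T \succeq 0,
\end{equation*}
so \eqref{rotavgblock} is equivalent to $\min_S \tr{\AA^T S}$ subject to $SS^T \preceq B$.

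\emph{Step 2 (Whitening).} Let $B^{1/2}$ be the symmetric PSD square root of $B$ and put $C = B^{1/2}\AA$. Any feasible $S$ must have its column space inside $\mathrm{range}(B)$, so I can write $S = B^{1/2} T$ without loss of generality. The constraint then becomes $T T^T \preceq I$, equivalently $\sigma_i(T)\le 1$ for all $i$, and the objective becomes $\tr{C^T T}$.

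\emph{Step 3 (von Neumann).} Applying von Neumann's trace inequality together with the singular-value bound on $T$,
\begin{equation*}
\tr{C^T T} \;\ge\; -\sum_i \sigma_i(C)\sigma_i(T) \;\ge\; -\sum_i \sigma_i(C) \;=\; -\tr{(C^T C)^{1/2}}.
\end{equation*}
Writing a thin SVD $C = U\Sigma V^T$, equality is attained by $T^* = -UV^T$, so $S^* = -B^{1/2}UV^T$. A short computation using $V\Sigma^{-1}V^T = M^\dagger$ with $M = (\AA^T B\AA)^{1/2}$ together with $B^{1/2}C = B\AA$ converts this into
\begin{equation*}
S^* = -B\AA \bigl[(\AA^T B\AA)^{1/2}\bigr]^\dagger,
\end{equation*}
which is the claimed formula.

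\emph{Main obstacle.} The key technical point is that $B$ is only assumed PSD, so $B^{1/2}$ need not be invertible and $\AA^T B\AA$ may be singular, which is precisely why a pseudoinverse appears in the answer. Care is needed to justify the substitution $S = B^{1/2}T$ in the rank-deficient case (components of $T$ inside $\ker(B^{1/2})$ affect neither feasibility nor the objective) and to verify the identification of $-B^{1/2}UV^T$ with $-B\AA M^\dagger$ when $C$ is rank-deficient. Both reduce to standard Moore--Penrose identities, after which the argument is routine matrix algebra.
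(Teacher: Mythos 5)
Your proof is correct, but it takes a genuinely different route from the paper's. The paper applies the Schur complement with respect to the $B$ block (yielding $I - S^T B^\dagger S \succeq 0$ together with the range condition $(I-BB^\dagger)S=0$), writes down the KKT system with multipliers $\Gamma,\Upsilon$, solves it to find $\Gamma = \tfrac12(\AA^T B\AA)^{1/2}$, back-substitutes to obtain \eqref{rotavgblocksol}, and finally checks feasibility; global optimality then rests on KKT sufficiency for the convex SDP. You instead take the Schur complement with respect to the identity block, which gives the cleaner condition $B - SS^T \succeq 0$ with no separate range condition, whiten via $S = B^{1/2}T$ to reduce the problem to minimizing a linear functional over the operator-norm ball $\{T : TT^T \preceq I\}$, and invoke von Neumann's trace inequality to obtain the global lower bound $-\tr{(\AA^T B\AA)^{1/2}}$ together with an explicit feasible $T^\ast = -UV^T$ attaining it. This buys a direct, self-contained certificate of global optimality (no appeal to KKT sufficiency or constraint qualifications) and makes the optimal value --- minus the nuclear norm of $B^{1/2}\AA$ --- transparent. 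The rank-deficiency issues you flag are real but handled correctly: feasibility forces $\mathrm{range}(S)\subseteq\mathrm{range}(B)$ (since $v^T SS^Tv \le v^TBv = 0$ for $v\in\ker B$), the substitution $S=B^{1/2}T$ is therefore surjective onto the feasible set with $T=(B^{1/2})^\dagger S$ satisfying $TT^T\preceq BB^\dagger \preceq I$, and the thin-SVD identities $(\AA^TB\AA)^{1/2}=V\Sigma V^T$, $M^\dagger = V\Sigma^{-1}V^T$, $B\AA = B^{1/2}U\Sigma V^T$ give $-B\AA M^\dagger = -B^{1/2}UV^T = S^\ast$ exactly as claimed; these play the role of the paper's condition \eqref{blockkktb2}. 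One minor caveat shared with the paper: when $\AA^TB\AA$ is singular the minimizer need not be unique, so the formula should be read as exhibiting \emph{a} global minimizer rather than \emph{the} minimizer.
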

Here $^\dagger$ denotes the Moore--Penrose pseudoinverse. 
\begin{proof}
See supplementary material. 
\end{proof}

\begin{algorithm}[hbt]
\vskip 2pt
\caption{ A block coordinate descent algorithm for the semidefinite relaxation ($DD$) in \eqref{rotavgDDr}. \label{alg1} }
{\normalsize
\begin{algorithmic}[]
  \STATE \begin{varwidth}[t]{\linewidth} {\bfseries input:} 
	  \hskip\algorithmicindent $\RR$, $Y^{(0)} \succeq 0$,\ \  $t=0.$
      \end{varwidth}
	\REPEAT
	\STATE   $\boldsymbol{\cdot}$ Select an integer $k \in [1,\ldots,n]$,  \\
	\STATE   $\boldsymbol{\cdot}$ $B_k$: the result of eliminating the k\textsuperscript{th} row 
	and column from $Y^t$.
	\STATE   $\boldsymbol{\cdot}$ $\AA_k$: the result of eliminating the k\textsuperscript{th} column
	and all but the k\textsuperscript{th} row from $\RR$.
    \STATE   $\boldsymbol{\cdot}$ 
    $S_k^*= -B_k \AA_k \big[ \big(\AA_k^T B_k \AA_k \big)^{\frac{1}{2}} \big]^\dagger$ as in \eqref{rotavgblocksol}.
    \STATE   $\boldsymbol{\cdot}$  $Y^{t} = \matris{ I  & S_k^{*T} &  \\ S_k^{*} & B_k }$,  {(succeeded by the appropriate reordering).}
	\STATE $\boldsymbol{\cdot}$  $t = t+1$
	\UNTIL{convergence}
   \end{algorithmic} }
\vskip 1pt
\end{algorithm}


\vspace{-5mm}
\section{Experimental Results}
\begin{table*}[htb]
\centering
\ra{1.1}
\begin{tabular}{@{}rrrrcrrrcrrr@{}}
\toprule
& \multicolumn{2}{c}{LM \cite{nocedal1999}} & \phantom{abc}& \multicolumn{2}{c}{Alg. \ref{alg1}} &
\phantom{abc} & \multicolumn{2}{c}{SeDuMi \cite{sedumi} }\\
\cmidrule{2-3} \cmidrule{5-6} \cmidrule{8-9}
$n$ \hspace{8mm} $\sigma$ [rad]\ \ & 
$avg. error$ (\%)&  $time [s]$&& 
$avg. error$ &  $time [s]$&& 
$avg. error$ & $time [s]$\\ \midrule
%
%
%
$20$ \hspace{14mm} $0.2 $ &  1.49 (0.48) & 0.012  &&     9.34e-10 & 0.028 &&     4.30e-09  & 0.501 \\ 
$$ \hspace{14mm} $0.5 $ &  0.56 (0.73) & 0.008  &&     3.94e-08 & 0.023 &&     3.72e-09  & 0.553 \\ 
$50$ \hspace{14mm} $0.2 $ &  0.55 (0.50) & 0.026  &&     1.3e-09 & 0.17 &&     6.85e-09  & 5.91 \\ 
$$ \hspace{14mm} $0.5 $ &  0.17 (0.58) & 0.017  &&     1.83e-07 & 0.33 &&     2.00e-09  & 6.32 \\ 
$100$ \hspace{14mm} $0.2 $ &  0.15 (0.55) & 0.042  &&     1.46e-07 & 8.89 &&     5.31e-09  & 47.0 \\ 
$$ \hspace{14mm} $0.5 $ &  0.15 (0.45) & 0.039  &&     6.64e-08 & 7.97 &&     7.41e-10  & 49.51 \\ 
$200$ \hspace{14mm} $0.2 $ &  0.099 (0.40) & 0.082  &&     4.02e-08 & 17.01 &&     4.15e-10  & 419.04 \\  
$$ \hspace{14mm} $0.5 $ &  0.031 (0.33) & 0.071  &&     6.79e-08 & 29.4 &&     6.91e-10  & 391.23 \\ 
\bottomrule
\end{tabular}
\caption{Comparison of running times and resulting errors on synthetic data. 
Here the errors are given with respect to the lowest feasible objective function value 
found. The fraction of the times the global optima was reached by the LM algorithm is indicated
along side the average error. \\
\label{table1}
}
\end{table*}
\begin{figure*}
\def\ww{27mm}
    \centering
    \includegraphics[width=\ww]{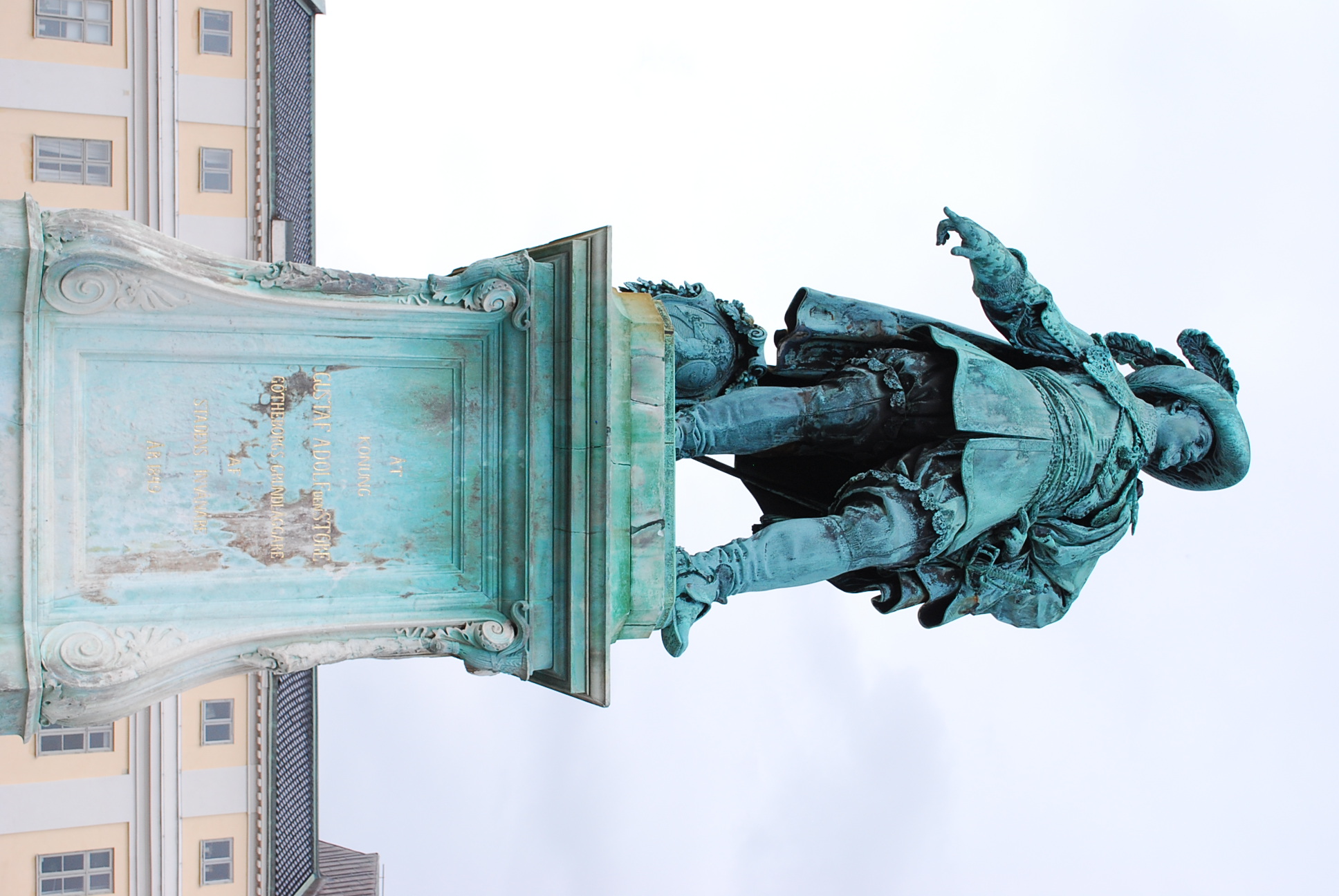} \hspace{.2cm}
    \includegraphics[width=\ww]{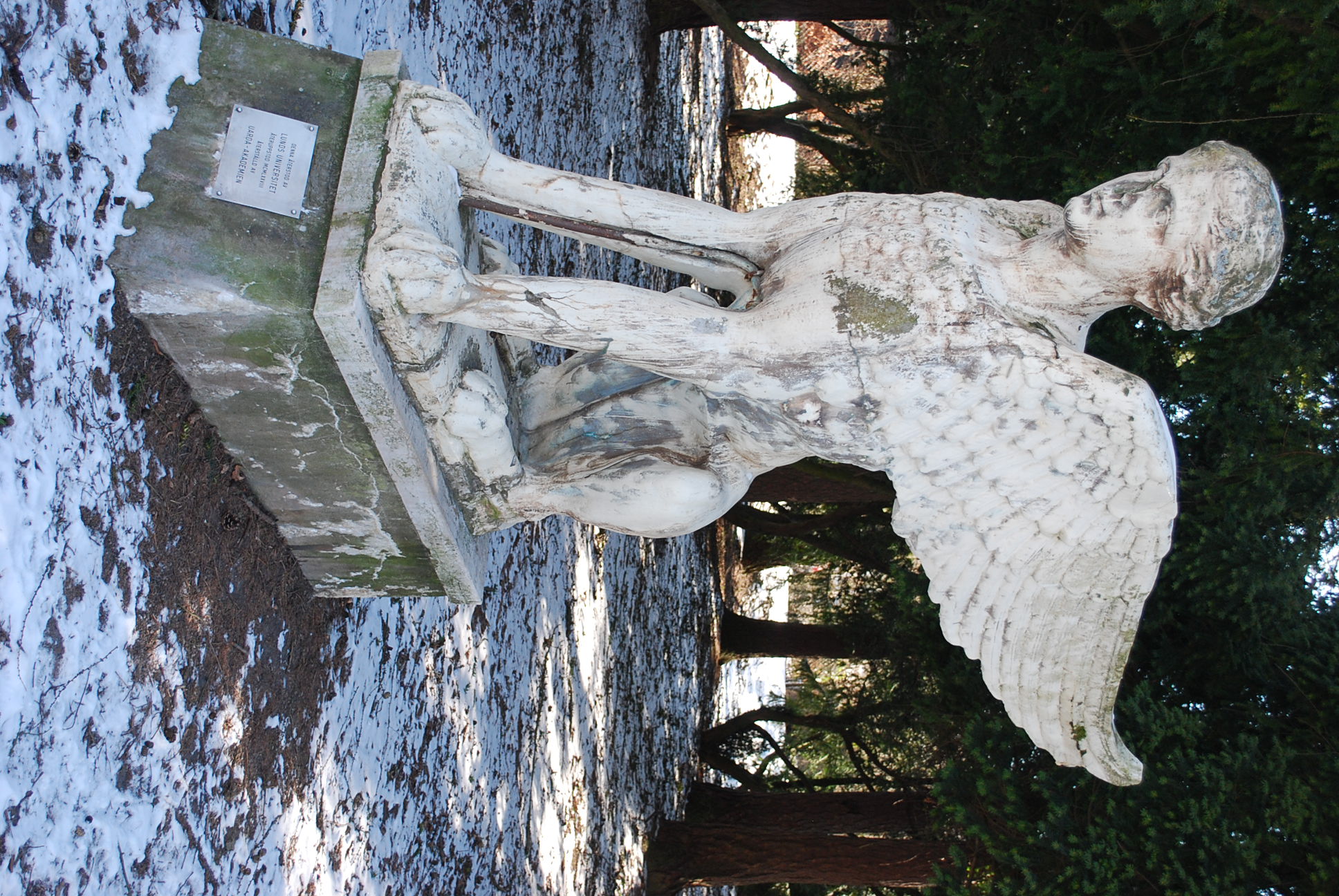} \hspace{.2cm}
    \includegraphics[width=\ww]{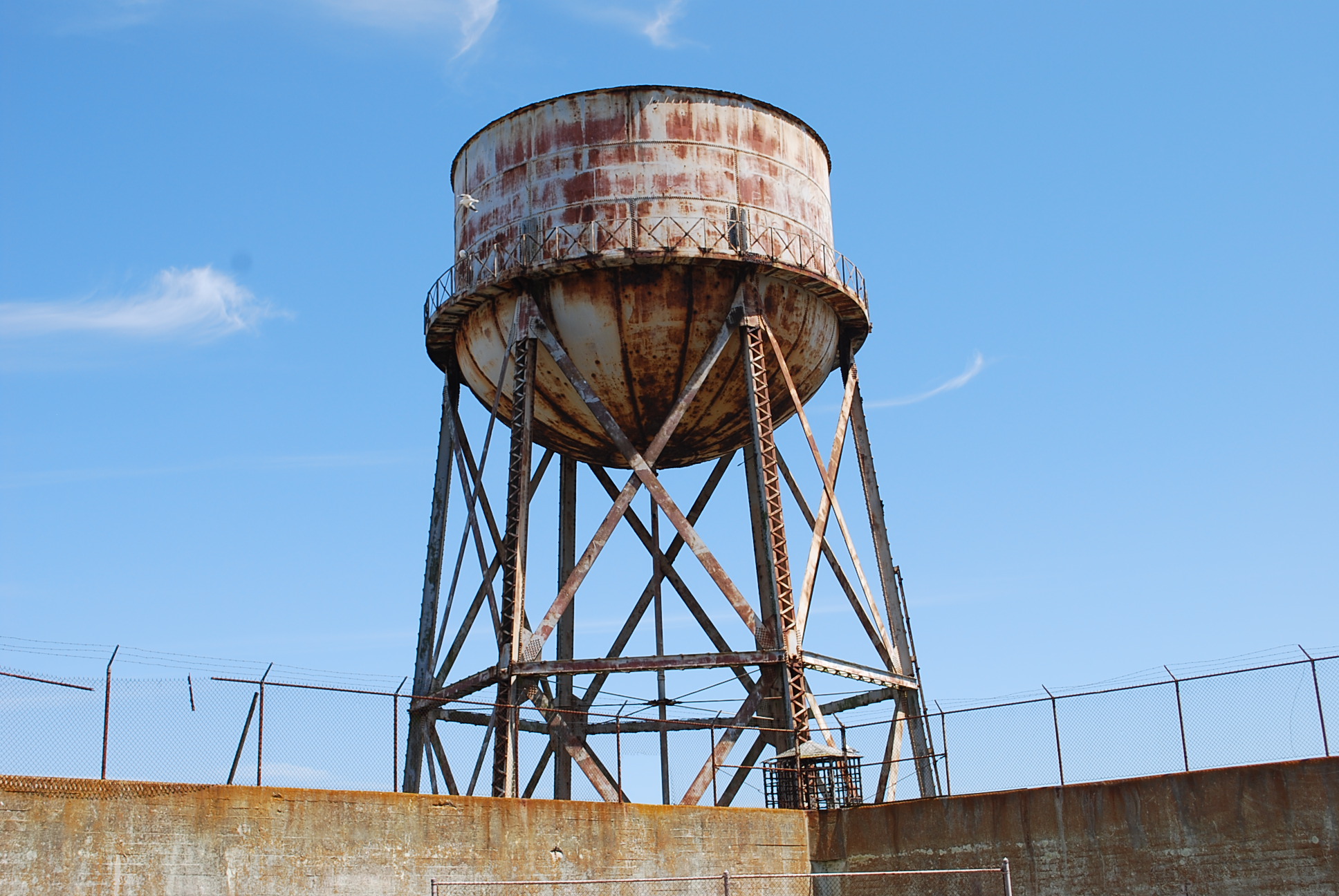} \hspace{.2cm}
    \includegraphics[width=\ww]{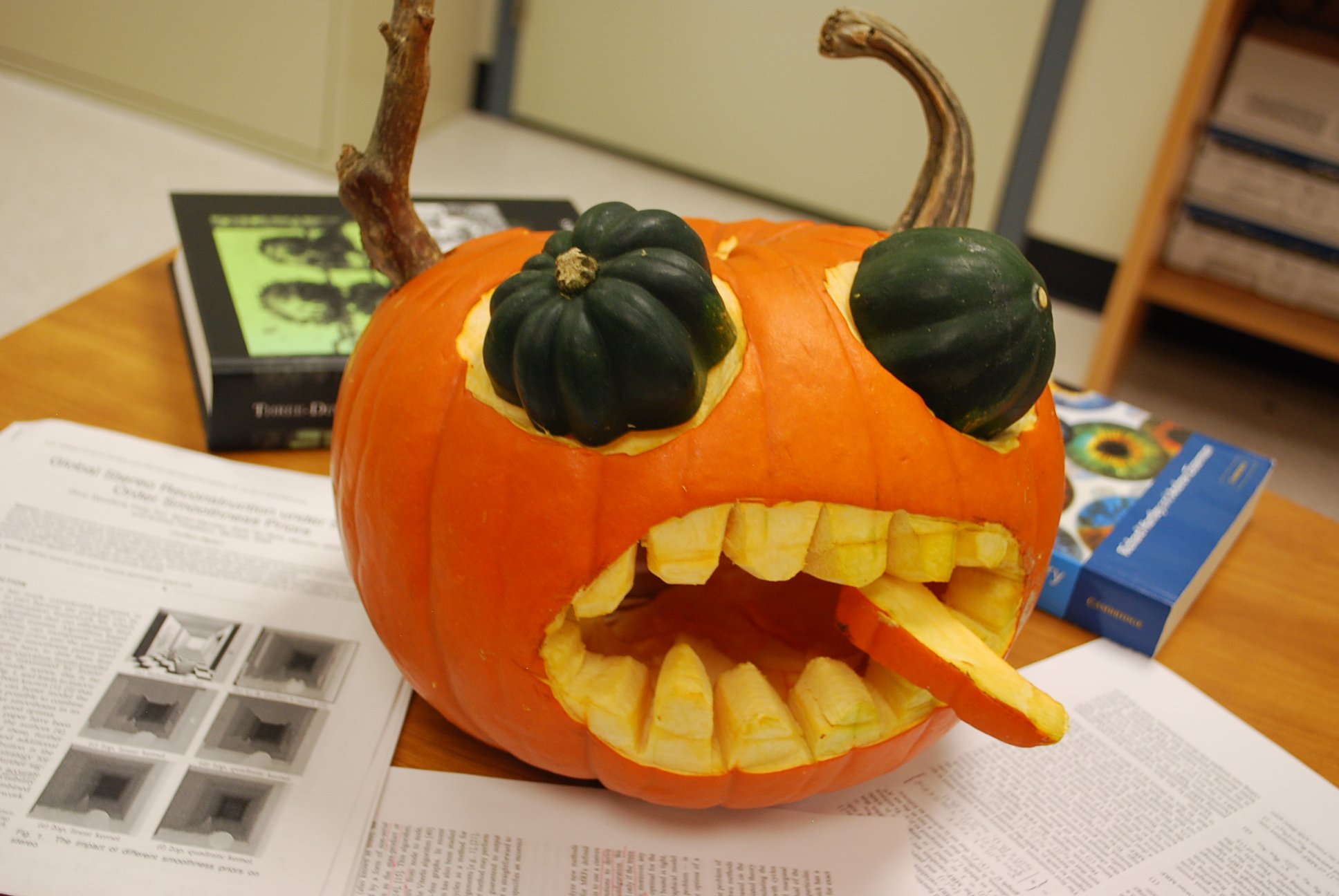} \hspace{.2cm}
    \includegraphics[width=\ww]{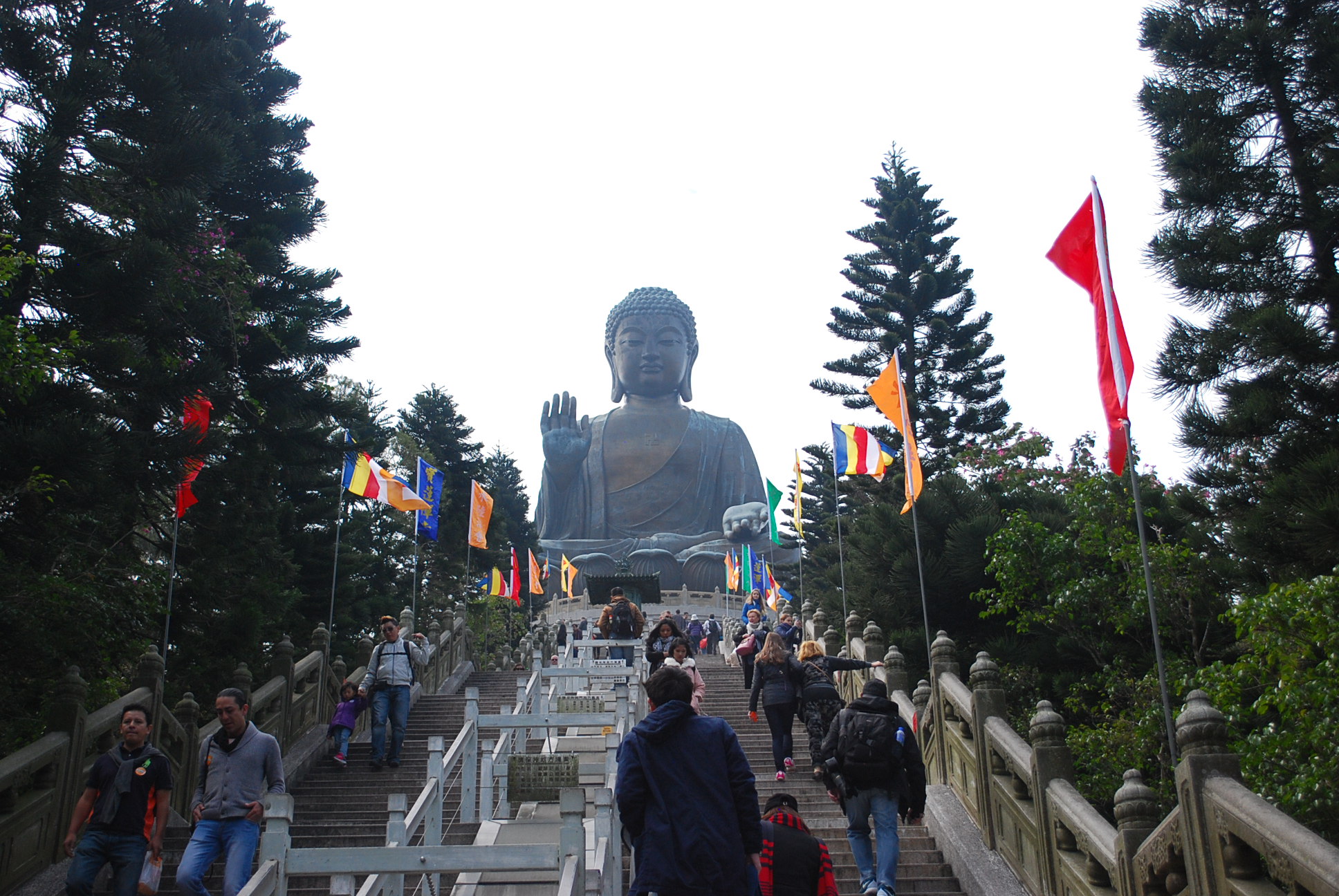} \\
    \includegraphics[width=\ww]{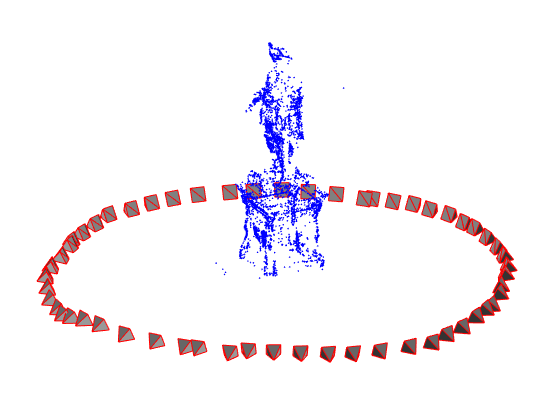} \hspace{.2cm}
    \includegraphics[width=\ww]{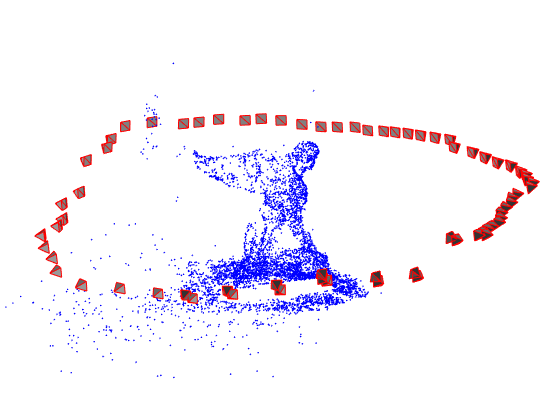} \hspace{.2cm}
    \includegraphics[width=\ww]{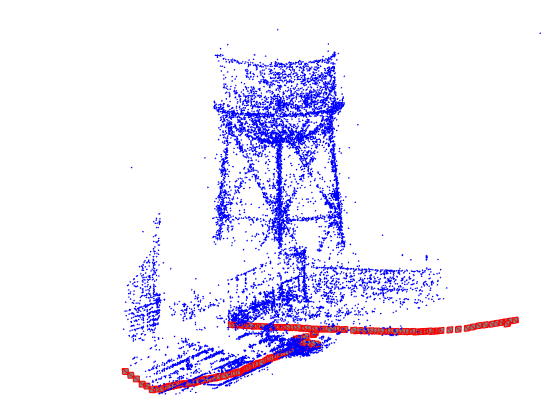} \hspace{.2cm}
    \includegraphics[width=\ww]{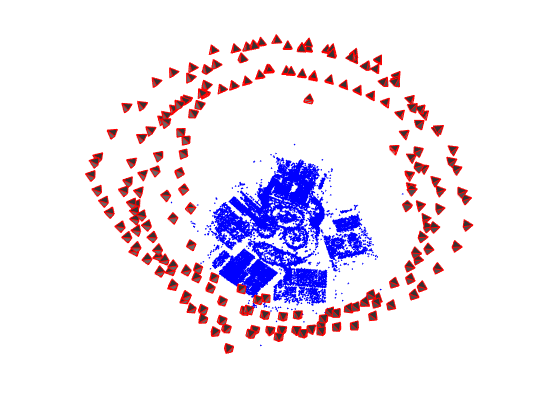} \hspace{.2cm}
    \includegraphics[width=\ww]{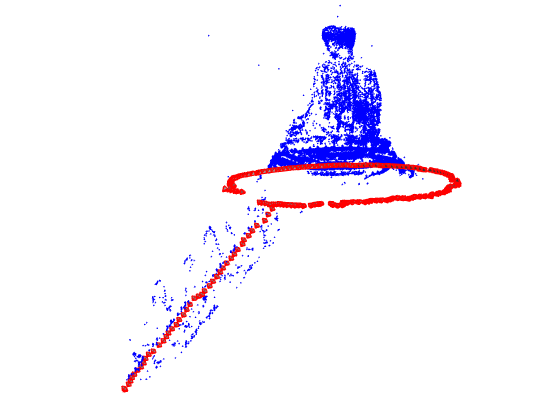} \\
    \vspace{-2mm}
    \caption{Images and reconstructions of the datasets in Table~\ref{table2}.}
    \label{fig:datasets}
\end{figure*}

In this section we present an experimental study aimed at characterizing the 
performance and computational efficiency of the proposed algorithm compared to existing standard numerical solvers.

\vspace{-2mm}
\paragraph{Synthetic data. }
In our first set of experiments we compared the computational efficiency 
of the Levenberg-Marquardt (LM) algorithm \cite{nocedal1999}, a standard nonlinear 
optimization method, 
Algorithm~\ref{alg1} and that of SeDuMi~\cite{sedumi}, a publicly available 
software package for conic optimization.

We constructed a large number of synthetic problem instances of increasing  size, perturbed by varying levels of noise. 
Each absolute rotation was obtained by rotation about the z-axis by $2\pi/n$ rad and by construction, forming a cycle graph.
The relative rotations were perturbed by noise in the form of a random rotation 
about an axis sampled from a uniform distribution on the unit sphere with angles normally distributed with mean $0$ and variance $\sigma$. 
The absolute rotations were initialized (if required) in a similar fashion but with the 
angles uniformly distributed over $[0,2\pi]$ rad. 

The results, averaged over $50$ runs, can be seen in Table~\ref{table1}. 
As expected, the LM algorithm significantly outperforms our algorithm as well as 
SeDuMi, but it only manages to obtain the global optima in about $30-70\%$ of the time. 
As predicted by Theorem~\ref{thm:strongduality2} and the discussion in Section~\ref{sec:strongduality} on cycle graphs, both Algorithm~\ref{alg1} and SeDuMi produce globally optimal solutions at every single problem instance, independent of the noise level and independent on the number of cameras.
From this table we also observe that Algorithm~\ref{alg1} does appear to outperform 
SeDuMi quite significantly with respect to computational efficiency. 

\begin{table}[htb]
\centering
\ra{1.1}
\begin{tabular}{@{}lrrrcrrrcrrr@{}}
\toprule
&& \multicolumn{2}{c}{$time [s]$} \\
\cmidrule{3-4} 
$Dataset$ & $n$  & Alg.~\ref{alg1} &  SeDuMi & 
$  |\alpha_{ij}|$ & $\alpha_{\max}$ \\
\midrule
Gustavus   &  57  &  3.25  &  8.28  & $6.33^\circ$ & $8.89^\circ$	\\ 
Sphinx &  70  &  3.87  &  14.40  & $6.14^\circ$	& $12.13^\circ$ \\
Alcatraz &  133 &  12.73 & 117.19 & $7.68^\circ$ & $43.15^\circ$\\ 
Pumpkin &  209  &  9.23  &  688.65  & $8.63^\circ$	& $3.59^\circ$ \\ 
Buddha &  322  &  16.71  &  1765.72  & $7.29^\circ$	& $14.01^\circ$\\ 
\bottomrule
\end{tabular}
\caption{The average run time and largest resulting angular residual 
($|\alpha_{ij}|$) and bound ($\alpha_{\max}$)  
on five different real-world datasets. \label{table2}
}
\end{table}

\vspace{-2mm}
\paragraph{Real-world data.}
In our second set of experiments we compared the computational efficiency on a number of 
publicly available real-world datasets \cite{enqvist2011non}. The results, again averaged over 
$50$ runs,  are presented in Table~\ref{table2}. Here, as in the previous experiment, both methods correctly produce the global optima 
at each instance. Algorithm \ref{alg1} again significantly outperforms SeDuMi 
in computational cost, providing further evidence of the efficiency of the proposed algorithm. 
It can further be seen that Theorem~\ref{thm:strongduality1} provides bounds sufficiently large to guarantee strong 
duality, and hence global optimality, in all the real-world instances except for one, the 
\textit{Pumpkin} dataset. 
Although strong duality does indeed hold in this case, the resulting certificate is less than the largest 
angular residual obtained. 
The camera graph 
is comprised both of densely as well as sparsely connected cameras, 
resulting in a large value of $d_{\max}$ in combination with a small value of $d_{\min}$ (minimum degree). 
Since $\lambda_2 \leq  d_{\min}$ a limited bound on $\alpha_{\max}$ follows directly from 
\eqref{eq:alpha-max}. This instance serves as a representative example of when the bounds of Theorem~\ref{thm:strongduality1}, although still valid and strictly positive, become too conservative in practice.


\section{Conclusions} 
In this paper we have presented a theoretical analysis of Lagrangian 
duality in rotation averaging based on spectral graph theory. 
Our main result states that for this class of problems strong duality will provably 
hold between the primal and dual formulations if the noise levels are sufficiently restricted. 
In many cases the noise levels required for strong duality not to hold can be shown to be quite severe. 
To the best of our knowledge, this is the first time such practically useful sufficient conditions for 
strong duality have been established for optimization over multiple rotations.  

A scalable first-order algorithm, a generalization of 
coordinate descent methods for semidefinite cone programming, was also presented. Our empirical validation 
demonstrates the potential of this proposed algorithm, significantly outperforming existing general purpose 
numerical solvers.

\bibliographystyle{ieee}
{\small
\bibliography{cvpr2017dualrotation}
}

\newpage

\twocolumn[ 
\begin{center}
{\Large \bf Rotation Averaging and Strong Duality - Supplementary Material\par}
\vspace*{24pt}
      {
      \large
      \lineskip .5em
      \begin{tabular}[t]{c} 
         \ifcvprfinal Anders Eriksson, Carl Olsson, Fredrik Kahl and Tat-Jun Chin
\else Anonymous CVPR submission\\
         \vspace*{1pt}\\
        Paper ID \cvprPaperID \fi
      \end{tabular}
      \par
      }
      \vskip .5em
      \vspace*{12pt}
\end{center}
]


\subsection*{Proof of Theorem~\ref{thm:strongduality2}}
\setcounter{section}{4}
\setcounter{theorem}{1}

\begin{theorem}
\label{thm:strongduality2b}
Let $R_i^*$, $i=1, \ldots, n$ denote a stationary point to the primal problem ($P$) for a cycle graph with $n$ vertices. Let $\alpha_{ij}$ denote the angular residuals, i.e., $\alpha_{ij} = \angle(R^*_i\Rij,R^*_j)$.
Then, $R_i^*$, $i=1, \ldots, n$ will be globally optimal and strong duality will hold for ($P$) if
 $$
 |\alpha_{ij}| \le \frac{\pi}{n} \quad \forall (i,j)\in E.$$
\end{theorem}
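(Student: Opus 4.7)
The plan is to exploit the very rigid structure that stationarity imposes on a cycle, which allows a bound far stronger than the generic $O(\pi^2/n^2)$ that Theorem~\ref{thm:strongduality1} gives here. Abbreviate $E_i := \E_{i,i+1} = R_i^* \RR_{i,i+1} R_{i+1}^{*T}$, so that $\E_{i,i-1} = E_{i-1}^T$. By \eqref{eq:lambda-solution}, $\Lambda_i^* = R_i^{*T}(E_i + E_{i-1}^T)R_i^*$, and since $\Lambda_i^*$ is symmetric, so is $E_i + E_{i-1}^T$. Decomposing each rotation's skew part as $E - E^T = 2\sin\alpha\,[\omega]_\times$, this forces $\sin\alpha_{i,i+1}\,\omega_{i,i+1} = \sin\alpha_{i-1,i}\,\omega_{i-1,i}$ for every $i$. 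Walking around the cycle, the tangent vector $\sin\alpha\,\omega$ is globally constant, and because $|\alpha_{ij}| \le \pi/n$ lies in the injectivity range of $\sin$, all of the $E_i$ must share a common axis $\omega$ and a common angle $\alpha$.

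With $E_i \equiv E$, the matrix $M := D_{R^*}(\Lambda^* - \RR)D_{R^*}^T$ from \eqref{eq:bigeps} becomes block-circulant with $3 \times 3$ blocks: diagonal $E + E^T$, cyclic super- and sub-diagonal blocks $-E$ and $-E^T$. Block Fourier diagonalization then reduces positive semidefiniteness of $M$ to that of the Hermitian $3 \times 3$ symbols
\begin{equation*}
\Sigma(e^{i\phi_k}) = (1 - e^{i\phi_k})E + (1 - e^{-i\phi_k})E^T, \qquad \phi_k = \tfrac{2\pi k}{n},
\end{equation*}
for $k = 0,\ldots,n-1$. Substituting Rodrigues' formula for $E$ and diagonalizing on the eigenspaces of $[\omega]_\times$ (the axis $\omega$ together with its two complex conjugate eigenvectors) yields the three eigenvalues of $\Sigma(e^{i\phi_k})$ in closed form:
\begin{equation*}
4\sin^2(\phi_k/2),\qquad 4\sin(\phi_k/2)\sin(\phi_k/2 + \alpha),\qquad 4\sin(\phi_k/2)\sin(\phi_k/2 - \alpha).
\end{equation*}

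For $k = 0$ all three vanish; for $k = 1,\ldots,n-1$ we have $\phi_k/2 \in (0,\pi)$, so $\sin(\phi_k/2) > 0$ and the PSD condition reduces to $\sin(\phi_k/2 \pm \alpha) \ge 0$. The binding modes are $k = 1$ and $k = n-1$, where $\phi_k/2 = \pi/n$, giving exactly $|\alpha| \le \pi/n$; all intermediate $k$ yield strictly weaker constraints. Invoking Lemma~\ref{lem:lambda-cond} then delivers strong duality and global optimality.

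The main obstacle is the first step: leveraging stationarity to force all $E_i$ to collapse to a single rotation. Once that reduction is in place, the block-circulant Fourier calculation is essentially mechanical. A small but essential subtlety is that $\sin\alpha_i = \sin\alpha_{i-1}$ also admits the spurious branch $\alpha_i = \pi - \alpha_{i-1}$, but this is ruled out by the hypothesis $|\alpha_{ij}| \le \pi/n$.
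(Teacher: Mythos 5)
Your proof is correct and follows the same two-stage strategy as the paper's: first use symmetry of the Lagrange multipliers to force all residual rotations around the cycle to coincide with a single rotation $\E$ of angle $\alpha$, then analyze the spectrum of the block-circulant matrix $D_{R^*}(\Lambda^*-\RR)D_{R^*}^T$ in \eqref{eq:cycle-laplace}. The difference lies in how the spectral step is executed, and here your version is actually the more complete one. The paper exhibits only the two eigenvectors $v_\pm$ built from $\cos(\tfrac{2\pi i}{n})u\pm\sin(\tfrac{2\pi i}{n})w$, computes their eigenvalues $4\sin(\tfrac{\pi}{n}\pm\alpha)\sin(\tfrac{\pi}{n})$, and asserts that the smaller of these governs positive semidefiniteness without verifying the remaining $3n-2$ eigenvalues. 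Your block-Fourier diagonalization supplies exactly that missing verification: every mode $k$ contributes eigenvalues $4\sin^2(\phi_k/2)$ and $4\sin(\phi_k/2)\sin(\phi_k/2\pm\alpha)$, which are nonnegative for all $k$ since $\phi_k/2\pm\alpha\in[\pi(k-1)/n,\pi(k+1)/n]\subseteq[0,\pi]$ under $|\alpha|\le\pi/n$, and the binding modes $k=1,n-1$ recover the paper's two critical eigenvalues. You are also more careful on the reduction step: the symmetry identity only gives $\sin\alpha_i\,\omega_i=\sin\alpha_{i-1}\,\omega_{i-1}$, and you correctly flag and exclude the spurious branch $\alpha_i=\pi-\alpha_{i-1}$ using the hypothesis, whereas the paper passes directly from ``$\E_{ij}\in\SO{3}$'' to $\E_{12}=\E_{1n}^T$. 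In short: same route, but your execution closes two small gaps left open in the published argument.
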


\begin{proof}
A sufficient condition for strong duality to hold is that $\Lambda^*-\RR \succeq 0$ (Lemma~\ref{lem:lambda-cond}), which is equivalent to $D_{R^*}(\Lambda^* - \RR)D_{R^*}^T \succeq 0$ with the same notation and argument as in (\ref{eq:DR}) and (\ref{eq:bigeps}). For a cycle graph, we get
$D_{R^*}(\Lambda^* - \RR)D_{R^*}^T = $
\begin{equation} \label{eq:bigeps2}
\begin{bmatrix}
\E_{12}+\E_{1n} & -\E_{12} & & & -\E_{1n} \\
-\E^T_{12} & \E^T_{12} + \E_{23} & -\E_{23}  \\
   &  -\E^T_{23} & \ddots & \ddots \\
 & & \ddots & \ddots &  \\ 
 -\E^T_{1n} 
\end{bmatrix}.
\end{equation}
As this matrix is symmetric, it implies for the first diagonal block that $\E_{12}-\E^T_{12}=\E^T_{1n}-\E_{1n}$. As all $\E_{ij} \in \SO{3}$, it follows that $\E_{12}=\E^T_{1n}=\E$ for some rotation $\E \in \SO{3}$. Similarly, for the second diagonal block $\E_{12} = \E^T_{23} = \E$ and by induction, the matrix $D_{R^*}(\Lambda^* - \RR)D_{R^*}^T$ has the following tridiagonal (Laplacian-like) structure
\begin{equation} \label{eq:cycle-laplace}
\begin{bmatrix}
\E \!+\!\E^T & -\E      &  &  & -\E^T \\
-\E^T   & \E \!+\!\E^T & -\E &  &  &  \\
        &  -\E^T & \ddots & \ddots  \\
  &  &  \ddots & \ddots & -\E \\
 -\E &  &  & -\E^T & \E\!+\!\E^T \\
\end{bmatrix}.
\end{equation}
Note that this means that the total error is equally distributed in an optimal solution among all the residuals, in particular, $\alpha_{ij}=\alpha$ for all $(i,j)\in E$, where $\alpha$ is the residual rotation angle of $\E$.


Let $v$ denote the rotation axis of $\E$ and let $u$ and $w$ be an orthogonal base which is orthogonal to $v$. Then, define the two vectors $v_{\pm} = (\, v_{\pm,1} \quad v_{\pm,2} \enspace \ldots \enspace v_{\pm,n} \,)^T$, where $v_{\pm,i} = \cos(\frac{2\pi i}{n}) u \pm \sin( \frac{2\pi i}{n}) w$ for $i=1,\ldots,n$. Now it is straight-forward
to check that $v_{\pm}$ are eigenvectors to (\ref{eq:cycle-laplace}) with eigenvalues $4\sin(\frac{\pi}{n} \pm \alpha)\sin(\frac{\pi}{n})$. The sign of the smallest of these two eigenvalues determines the positive definiteness of the matrix in (\ref{eq:cycle-laplace}). In other words,
we have shown that if $|\alpha| \le \frac{\pi}{n}$ then $D_{R^*}(\Lambda^* - \RR)D_{R^*}^T \succeq 0$.
\end{proof}


\subsection*{Proof of Lemma~\ref{lemmaavgblocksol}}
\setcounter{section}{5}
\setcounter{theorem}{0}

\begin{lemma}
\label{lemmaavgblocksol2}
Let $B$ be a positive semidefinite matrix. Then, the solution to \eqref{rotavgblock} 
is given by, 
\begin{align}
S^*= -B \AA \left[ \Big(\AA^T B \AA \Big)^{\frac{1}{2}} \right]^\dagger.  
\label{rotavgblocksol2}
\end{align}
\end{lemma}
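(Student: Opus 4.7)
The plan is to use the Schur complement to reformulate the PSD constraint as an operator-norm bound on a reduced variable, reducing the problem to the classical task of minimizing a linear functional over the unit spectral-norm ball, whose closed-form minimizer is given by a polar decomposition.

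First, since $B \succeq 0$, I would use the (generalized) Schur complement to rewrite the feasible set in \eqref{rotavgblock} as the conditions $S \in \mathrm{range}(B)$ together with $I - S^T B^\dagger S \succeq 0$. Next, introduce the change of variable $S = B^{1/2} T$, where $B^{1/2}$ is the positive semidefinite square root. Any $S$ in the range of $B$ can be written this way, and one verifies that under this substitution the constraint $I - S^T B^\dagger S \succeq 0$ becomes $I - T^T T \succeq 0$, i.e., $\|T\|_{\mathrm{op}} \le 1$ (here one may restrict $T$ to the range of $B^{1/2}$ without changing $S$ or the objective, which makes the pseudoinverse identities $B^{1/2} B^\dagger B^{1/2} = P_{\mathrm{range}(B)}$ act as the identity).

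Setting $C = B^{1/2} A$, the objective becomes $\mathrm{tr}(A^T S) = \mathrm{tr}(C^T T)$, so the subproblem reduces to
\begin{equation}
\min_{T:\ \|T\|_{\mathrm{op}} \le 1} \ \mathrm{tr}(C^T T).
\end{equation}
By Von Neumann's trace inequality, this minimum equals $-\|C\|_*$ (the nuclear norm) and is attained at $T^* = -UV^T$ where $C = U\Sigma V^T$ is a (thin) SVD. A direct calculation shows that $-UV^T = -C\big[(C^T C)^{1/2}\big]^\dagger$, which is the standard closed-form expression for the orthogonal factor in the polar decomposition, valid even when $C$ is rank-deficient. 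Substituting back via $S^* = B^{1/2} T^*$ and using the identities $B^{1/2} C = B A$ and $C^T C = A^T B A$ then yields exactly $S^* = -BA\big[(A^T B A)^{1/2}\big]^\dagger$, proving the lemma.

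The main obstacle is being careful with the pseudoinverse and range conditions when $B$ (and hence $A^T B A$) is singular: the reductions $I - S^T B^\dagger S \succeq 0 \iff I - T^T T \succeq 0$ and $-UV^T = -C[(C^TC)^{1/2}]^\dagger$ both require verifying that components of $T$ orthogonal to the relevant ranges can be discarded without loss. Once these routine range/pseudoinverse bookkeeping steps are done, the remainder is the well-known polar-decomposition argument and a short algebraic substitution.
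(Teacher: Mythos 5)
Your proof is correct, and it takes a genuinely different route from the paper. The paper also begins with the generalized Schur complement (obtaining $I-S^TB^\dagger S\succeq 0$ and $(I-BB^\dagger)S=0$), but then writes down the KKT system of the reduced problem with multipliers $\Gamma$ and $\Upsilon$, algebraically solves it to find $\Gamma=\tfrac12(\AA^TB\AA)^{1/2}$ and hence $S=-B\AA[(\AA^TB\AA)^{1/2}]^\dagger$, and finally verifies feasibility and the remaining KKT conditions to certify optimality. You instead substitute $S=B^{1/2}T$ to turn the feasible set into the operator-norm ball $\|T\|\le 1$ (restricted to $\mathrm{range}(B^{1/2})$, which is harmless), and then invoke von Neumann's trace inequality / the polar decomposition to minimize $\tr{C^TT}$ with $C=B^{1/2}\AA$. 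Your route is somewhat more self-contained: it avoids introducing multipliers, gives the optimal value $-\|B^{1/2}\AA\|_*$ for free, and handles the rank-deficient case transparently through the explicit SVD computation $C[(C^TC)^{1/2}]^\dagger=U\Sigma\Sigma^\dagger V^T$ (which, as you note, still attains the minimum even though it differs from $UV^T$ when $C$ is singular; in that case the minimizer is simply non-unique). The paper's KKT route, by contrast, produces the dual multiplier $\Gamma$ explicitly, which can serve as an optimality certificate, and fits the duality-centric narrative of the rest of the paper. Both arguments hinge on the same range/pseudoinverse bookkeeping ($B^\dagger BB^\dagger=B^\dagger$, $BB^\dagger S=S$), which you handle correctly.
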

\begin{proof}
From the Schur complement, we have that the $2\times 2$ block matrix in 
\eqref{rotavgblock} is positive semidefinite if and only if 
\begin{align}
I - S^T B^{\dagger}S \succeq 0,\\
(I-B B^\dagger)S=0.
\label{sdpequal}
\end{align}
Hence the problem \eqref{rotavgblock} is equivalent to 
\begin{subequations}
\label{rotavgblock2}
\begin{align}
\min_{\mathclap{S \in \R^{3n\times 3}}} &  \hspace{5mm} \trace{\AA}{S} \\
 \st &\ \ \hspace{5mm}  I-S^T B^{\dagger} S \succeq 0, \\
&\ \ \hspace{5mm}  (I-B B^\dagger)S=0. 
\end{align}
\end{subequations}
The KKT conditions for \eqref{rotavgblock2}, with Lagrangian multipliers $\Gamma$ and
$\Upsilon$ , become
\begin{align}
\AA + 2 B^{\dagger} S \Gamma  +  (I-B B^\dagger)\Upsilon =0, \label{blockkkta}\\ 
I-S^T B^{\dagger} S \succeq 0,\label{blockkktb}\\
(I-B B^\dagger)S=0,  \label{blockkktb2}\\
\Gamma \succeq 0,\label{blockkktc}\\   
(I-S^T B^{\dagger} S) \Gamma =0. \label{blockkktd}
\end{align}
Rewrite \eqref{blockkkta} and \eqref{blockkktd} as 
\begin{align}
B^{\dagger} S \Gamma &=-\frac{1}{2}\AA  -\frac{1}{2}  (I-B B^\dagger)\Upsilon, \label{blockkkta2}\\ 
\Gamma^T\Gamma  &= \Gamma^T S^T B^{\dagger} S \Gamma. \label{blockkktd2}
\end{align}
Since the pseudoinverse fulfills $B^\dagger B B^\dagger = B^\dagger$, combining \eqref{blockkkta2} and \eqref{blockkktd2} we obtain 
\begin{align}
&\Gamma^2  = \Gamma^T S^T B^{\dagger} B B^{\dagger} S \Gamma = \\
&=\frac{1}{4}
\left( \AA  + (I-B B^\dagger)\Upsilon \right)^T B \left( \AA  + (I-B B^\dagger)\Upsilon \right) = \\
&=\frac{1}{4} \AA^T B \AA.  \label{blockkkt3}
\end{align}
Here the last equality follows since $B(I-B B^\dagger)=0$. 
This gives
\begin{align}
\Gamma  =  \frac{1}{2} \Big( \AA^T B \AA \Big)^{\frac{1}{2}} \label{blockkkt4}. 
\end{align}
Inserting \eqref{blockkkt4} in \eqref{blockkkta2} 
\begin{align}
B^{\dagger} S \Big( \AA^T B \AA \Big)^{\frac{1}{2}} &=-\AA  -  (I-B B^\dagger)\Upsilon, \\
\end{align}
multiplying with $B$ form the left on both sides and using \eqref{blockkktb2}, $B B^\dagger S = S $, 
we arrive at 
\begin{align}
S  \Big( \AA^T B \AA \Big)^{\frac{1}{2}} &=-B\AA, 
\end{align}
and consequently  
\begin{align} 
S  &=-B \AA \left[ \Big( \AA^T B \AA \Big)^{\frac{1}{2}} \right]^\dagger.
\label{blockkkt5}
\end{align}
Finally, since 
\begin{align}
&\Gamma =  \frac{1}{2} \Big( \AA^T B \AA \Big)^{\frac{1}{2}}  \succeq 0, \\
& I-S^T B^\dagger S =  \nonumber \\
&  \hspace{5mm} =I- \left[ \Big( \AA^T B \AA \Big)^{\frac{1}{2}} \right]^\dagger \AA^T B \AA 
\left[ \Big( \AA^T B \AA \Big)^{\frac{1}{2}}\right]^\dagger \succeq 0, 
\end{align}
the conditions \eqref{blockkktb} and  \eqref{blockkktc} are satisfied 
then 
\eqref{rotavgblocksol2} must  be a feasible and optimal solution 
to \eqref{rotavgblock2} and consequently also to \eqref{rotavgblock}. 
\end{proof}


\end{document}